\definecolor{blue}{rgb}{0, 0, .7}
\definecolor{darkred}{rgb}{.5, 0, 0}
\definecolor{darkgreen}{rgb}{0, .5, 0}
\title{A Spectral Condition for Feature Learning}
\author{%
  Greg Yang\thanks{Equal contribution.} \\
  xAI\\
  \And
  James B.~Simon$^*$\\
  UC Berkeley \textit{\&} Imbue
  \And 
  Jeremy Bernstein$^*$ \\
  MIT
}
\begin{document}

\maketitle

\begin{abstract}

The push to train ever larger neural networks has motivated the study of initialization and training at large network width.
A key challenge is to scale training so that a network's internal representations evolve nontrivially at all widths, a process known as \emph{feature learning}.
Here, we show that feature learning is achieved by scaling the \textit{spectral norm} of weight matrices and their updates like $\smash{\sqrt{\texttt{fan-out}/\texttt{fan-in}}}$, in contrast to widely used but heuristic scalings based on Frobenius norm and entry size.
Our spectral scaling analysis also leads to an elementary derivation of \emph{maximal update parametrization}.
We develop this spectral perspective and structure the text with the intent of providing the reader with a solid conceptual understanding of the scaling behavior of feature learning in neural networks.


\end{abstract}

\section{Introduction}
\label{sec:intro}

Recent years have seen an unprecedented push to train deep learning systems with more and more parameters, leading to powerful models across domains and the unlocking of qualitatively new capabilities \citep{silver:2016, brown-2020-gpt3, ramesh:2022-dalle2}.
This continuing trend, combined with the technical challenges of training large models, has motivated much recent study of the dynamics of neural networks at \textit{large width}, and more generally the study of how their dynamics scale as network width grows.
This program has yielded a cornucopia of theoretical insights \citep{lee:2018-nngp, jacot:2018, arora:2019-cntk, canatar:2021-spectral-bias} and practical scaling recommendations \citep{yang:2021-tensor-programs-IV, yang2021tuning, dey:2023-cerebras-gpt}.

A key challenge when training a network of large width is to ensure that \textit{feature learning} occurs at hidden layers.
By this, we mean that the hyperparameters of the network are scaled in a manner such that the hidden representations of the network (as obtained by partial evaluation of the network up to a certain layer) change substantially over the course of training. Na\"ive hyperparameter scaling rules, including the well-studied ``neural tangent parametrization'' (NTP), in fact \textit{lose} feature learning at large width \citep{lee:2019-ntk, sohl-dickstein:2020-parameterization}. But  ample evidence supports the conclusion that proper feature learning is necessary for achieving optimal performance on many tasks \citep{lee:2020, fort:2020-deep-learning-versus-kernel-learning, atanasov:2021-silent-alignment, vyas:2022-limitations-of-the-ntk-for-generalization}. Furthermore, scaling training correctly can lead to new functionality such as \textit{hyperparameter transfer}. For instance, the recently proposed \textit{maximal update parametrization} \citep{yang:2021-tensor-programs-IV,yang2021tuning} allows for transferring hyperparameters from narrow models to wide models, avoiding the cost of tuning the wide model directly.

Maximal update parametrization ($\mu$P) is derived by fairly involved ``tensor programs'' arguments that track feature distributions analytically in the infinite width limit. Anecdotally, the principles underlying $\mu$P are not well understood by the community. In this paper, we provide a new perspective on $\mu$P, showing that its scaling relations can be obtained by elementary linear algebra arguments. In short, we show that $\mu$P is equivalent to scaling the \textit{spectral norm} of any weight matrix or update like $\smash{\sqrt{\texttt{fan-out}/\texttt{fan-in}}}$. This simple condition has various favorable numerical properties that contrast sharply with heuristic optimization strategies based on controlling the Frobenius norm \citep{lars} or entry size \citep{kingma_adam:_2015} of updates. In the authors' experience, the spectral scaling condition both simplifies the implementation of $\mu$P in code, and is significantly easier to work with theoretically, leading to further conceptual advances in our research \citep{agd-2023}.

On a more fundamental level, an important step to solving many problems in classical computer science is to write down a suitable distance function for the problem at hand \citep{bregman}. This idea is of particular importance in the design of optimization algorithms, where a notion of parameter distance is needed \citep{nemirovsky_yudin_1983, amari}. While it can be tempting to use the Euclidean norm on parameter vectors to measure distance, this na\"ive choice risks discarding the structure of the problem. For example, neural networks involve compositions of linear operators, which we refer to as their \textit{operator structure}. Past efforts to metrize the space of neural networks while accounting for their operator structure have included using the Frobenius norm to measure distance between matrices \citep{my-fromage}, which motivates various optimization algorithms that make Frobenius-normalized updates \citep{You2020Large, lars, pmlr-v80-shazeer18a, my-nero}. This paper shows that the spectral norm provides a better notion of distance between operators in the context of deep learning.

\subsection{Road map for the paper}

To begin, we state the precise conditions on hidden features that we wish to ensure.
We will ask for two things: both the \textit{features} and their \textit{updates} upon a step of gradient descent must be the proper size.
\begin{tcolorbox}[colback=white, colframe=black, boxrule=1pt, arc=0pt]
  \begin{desired}[Feature learning] \label{des:scaling} Let $\vh_\ell(\vx) \in\R^{\width_\ell}$ denote the features of input $\vx$ at layer $\ell$ of a neural network, and let $\Delta \vh_\ell(\vx)\in\R^{\width_\ell}$ denote their change after a gradient step. We desire that:
    \begin{align*}
        \norm{\vh_\ell}_2 = \Theta (\sqrt{\width_\ell})
        \quad \text{and} \quad
        \norm{\Delta \vh_\ell}_2 = \Theta (\sqrt{\width_\ell}),
     \quad \text{ at layers } \ell=1,...,L\!-\!1.
    \end{align*}
\end{desired}
\end{tcolorbox}
Let us unpack \cref{des:scaling}.
These conditions treat the \textit{$\ell^2$-norms} of the feature vectors $\vh_\ell(\vx)$ and $\Delta \vh_\ell(\vx)$, a framing which will prove convenient.
\cref{des:scaling} amounts to asking that the ``typical element size'' of vectors $\vh_\ell(\vx)$ and $\Delta \vh_\ell(\vx)$ is $\Theta(1)$ with respect to width $\width_\ell$ (we give a review of big-$\Theta$ notation in \cref{sec:preliminaries}). Enforcing that hidden features have $\Theta(1)$ element size has long been a principle of deep learning parametrization, motivated by the fact that activation functions are designed to take order-one inputs and give order-one outputs \citep{lecun:2002-backprop-tricks}.
Our second requirement stipulates that feature entries also undergo $\Theta(1)$ updates during training.
Note that any larger updates would blow up at large width, and any smaller updates would vanish at large width. We take \cref{des:scaling} as our definition of feature learning.\footnote{Our notion of feature learning might also be called ``nontrivial feature evolution.'' While other authors may prefer different notions of ``feature learning''---for example, the learning of interpretable, visualizable functions at hidden nodes \citep{zeiler:2014-visualizing-cnns, olah:2017}---nontrivial feature evolution in our sense is necessary for any other reasonable definition of the term.}

Our main message is that feature learning in the sense of \cref{des:scaling} may be ensured by the following \textit{spectral scaling condition} on the weight matrices of a deep network and their gradient updates:
\begin{tcolorbox}[colback=white, colframe=black, boxrule=1pt, arc=0pt]
  \begin{condition}[Spectral scaling] \label{cond:scaling} 
    Consider applying a gradient update $\DW_\ell \in \R^{\width_\ell\times \width_{\ell-1}}$ to the $\ell$th weight matrix $\mW_\ell\in\R^{\width_\ell \times \width_{\ell-1}}$.
    The spectral norms of these matrices should satisfy:
    \begin{align*}
        \norm{\mW_\ell}_* = \Theta \left( \sqrt{\frac{\width_\ell}{\width_{\ell-1}}} \right)
        \ \ \ \text{and} \ \ \
        \norm{\DW_\ell}_* = \Theta \left( \sqrt{\frac{\width_\ell}{\width_{\ell-1}}} \right),
     \quad \text{ at layers } \ell=1,...,L.
    \end{align*}
\end{condition}
\end{tcolorbox}
We review the spectral norm in \cref{sec:preliminaries}.
The spectral scaling condition has two components which will serve to enforce the respective components of \cref{des:scaling}.
The first component mandates that each \emph{weight matrix} has a spectral norm of a certain size, which will serve to enforce that the layer passes forward features of the correct size.
The second component mandates that each \emph{gradient update} has a spectral norm of a certain size, which will ensure that subsequent features undergo a change of the correct size.
We have implicitly assumed that the input has size $\norm{\vx}_2 = \Theta(\sqrt{\width_0})$, which is standard for image data. Language models are an important counterexample, where embedding matrices take one-hot inputs and the $\sqrt{\width_0}$ in \cref{cond:scaling} should be replaced by $1$.
\cref{app:nondim} provides a unifying treatment of these cases.

To get some quick intuition for the origin of \cref{cond:scaling}, observe that under the forward propagation
$\vh_\ell(\vx) = \mW_\ell \vh_{\ell-1}(\vx)$, if the layer input $\vh_{\ell-1}(\vx)$ aligns with the top singular vector of the weight matrix $\mW_\ell$, then
$\norm{\vh_\ell(\vx)}_2 = \norm{\mW_\ell}_* \cdot \norm{\vh_{\ell-1}(\vx)}_2$.
The requirement that
$\norm{\mW_\ell}_* = \Theta ( \sqrt{\width_\ell / \width_{\ell-1}} )$ then follows from \cref{des:scaling}.
The scaling of $\norm{\DW_\ell}_*$ can similarly be obtained by writing $\Delta \vh_\ell(\vx) = \DW_\ell \vh_{\ell-1}(\vx) + \ldots$ and applying the same argument. The key missing step is to justify that layer inputs actually do line up with the top singular subspaces of both weight matrices and weight updates. As the paper will show, gradient descent training actually induces this form of alignment.

The bulk of this paper is dedicated to thoroughly demonstrating that training in accordance with our spectral scaling condition satisfies \cref{des:scaling} in MLPs.
As an accessible path to this conclusion, we begin in \cref{sec:warmup_and_extensions} with a simple model---a deep linear MLP trained for one step on one example---and then successively extend to multiple training steps, a nonlinear model, and multiple inputs.
In the process, we give a scaling analysis of the dynamics of feature learning.
We then explain how \cref{cond:scaling} may be achieved in a standard deep learning setting and compare-and-contrast the resulting scaling prescription with others in the literature. In particular, we recover the recent ``maximal-update parametrization'' (\muP{}) \citep{yang:2021-tensor-programs-IV}.

\subsection{Summary of contributions}
Concretely, our contributions are as follows:
\begin{itemize}
    \item We propose the \textit{spectral scaling condition} (\cref{cond:scaling}) and show that it suffices to achieve feature learning in neural networks even at large width.
    \item We show how \cref{cond:scaling} may be implemented: either via direct spectral normalization, or by layerwise initialization scales $\{\sigma_\ell\}$ and learning rates $\{\eta_\ell\}$ that recover \textit{maximal update parametrization}.%
    \footnote{In fact, \cref{cond:scaling} is the \emph{unique} scaling on spectral norm that is equivalent to $\mu$P in its usual definition in terms of learning rate and initialization scaling (\cref{thm:recovermuP}).}
    \item We show that other popular scaling rules, including so-called \textit{standard parameterization} and \textit{neural tangent parametrization}, fail to satisfy \cref{cond:scaling}.
\end{itemize}


In the main text, we focus on MLPs trained via ordinary gradient descent for clarity. Our results may actually be extended to cover any architecture and any adaptive optimizer (for a suitable definition of \emph{any}, c.f.\  \cref{{sec:formal_theory}}).
Therefore our spectral scaling condition provides a unifying hyperparameter scaling rule that remains the same whether the underlying optimizer is, say, SGD or Adam. We suggest that, when one wishes to determine how the hyperparameters of a new deep learning system should scale with width, one might turn to the satisfaction of \cref{cond:scaling} as an overarching principle. 

\section{Preliminaries}
\label{sec:preliminaries}

Here we review standard notations which we use in our scaling analysis.

\textbf{Scaling notation.}
We will use the usual big-$O$ notation and variants to make statements about how various quantities scale with network width. Intuitively speaking:
\begin{itemize}
    \item $f(\width) = O(g(\width))$ means that $f(\width)$ ``scales no faster than'' $g(\width)$,
    \item $f(\width) = \Theta(g(\width))$ means that $f(\width)$ ``scales like'' or ``is order'' $g(\width)$,
    \item $f(\width) = \Omega(g(\width))$ means that $f(\width)$ ``scales at least as fast as'' $g(\width)$.
\end{itemize}
Formally, $f(\width) = \Theta(g(\width))$ is equivalent to the statement that there exist constants $c, C > 0$ such that $c \cdot g(\width) \le f(\width) \le C \cdot g(\width)$ for all sufficiently large $d$.
The weaker statements $f(\width) = O(g(\width))$ and $f(\width) = \Omega(g(\width))$ entail only the upper and lower bounds, respectively.

We will \textit{only} be concerned with scaling with respect to layer widths in this paper.
Big-$O$ notation will hide any dependence on other factors --- such as depth, dataset size, learning rate schedule, a global learning rate prefactor --- and our statements purely concern how quantities will or should scale with model width.

\textbf{Vector and matrix norms.}
We will use the standard $\ell^2$-norm $\norm{\cdot}_2$ to assess the size of a vector.
For matrices, we will principally use the \textit{spectral norm} $\norm{\cdot}_*$ (a.k.a.~\textit{operator norm}) defined as follows:
\begin{definition}[Spectral norm] \label{def:spectral_norm}
The spectral norm of a matrix $\mA \in \R^{m \times n}$ is given by
\begin{equation}
    \norm{\mA}_* \defeq
    \max_{\vv \in \R^{n} \backslash \{\mathbf{0}\}}
    \frac{\norm{\mA \vv}_2}{\norm{\vv}_2}.
\end{equation}
\end{definition}
That is, the spectral norm is the largest factor by which a matrix can increase the norm of a vector on which it acts.
The spectral norm of a matrix is equal to its largest singular value.
We will sometimes contrast the spectral norm with the \textit{Frobenius norm} $\norm{\cdot}_F$ given by
$\smash{\norm{\mA}_F^2 = \sum_{ij} A_{ij}^2}$.

\textbf{Properties of the spectral norm.}
Let $\mA, \mB \in \R^{m \times n}$ be arbitrary matrices and $\vv \in \R^n$ be an arbitrary vector. As with all norms, the spectral norm is \textit{subadditive}, meaning that it obeys the \textit{triangle inequality} $\norm{\mA + \mB}_* \leq \norm{\mA}_* + \norm{\mB}_*$.
The spectral norm is also \textit{submultiplicative} in the sense that $\norm{\mA \vv}_* \le \norm{\mA}_* \cdot \norm{\vv}_2$ and $\norm{\mA \mB}_* \le \norm{\mA}_* \cdot \norm{\mB}_*$.
If we interpret a vector $\vv \in \R^n$ as a $1\times n$ matrix, then the $\ell^2$, spectral and Frobenius norms are equivalent: $\norm{\vv}_2 = \norm{\vv}_* = \norm{\vv}_F$.

\textbf{Special cases of the spectral norm.}
For a \textit{rank-one} matrix $\mA$, which can be written as an outer-product $\mA = \vu \vv\T$, it holds that $\norm{\mA}_* = \norm{\mA}_F = \norm{\vu}_2 \cdot \norm{\vv}_2$. A matrix $\mB \in \R^{m \times n}$ is \textit{semi-orthogonal} if either $\mB\T \mB = \mI_n$ or $\mB \mB\T = \mI_m$.
A semi-orthogonal matrix has unit spectral norm: $\norm{\mB}_* = 1$.

\section{The spectral scaling condition induces feature learning}
\label{sec:warmup_and_extensions}

In this section, we show that our spectral scaling condition (\cref{cond:scaling}) achieves feature evolution of the correct scale in multilayer perceptrons (MLPs).
We begin with a toy example which conveys key intuitions and then give a series of extensions which recover a much more general case.

\subsection{Warmup: deep linear MLP, one step of SGD, on a single example}
\label{subsec:warmup}

We begin with a simple model: a deep linear MLP trained for one step on a single input.
While elementary, this example will be sufficient to capture the intuition for a much more general case.

\textbf{Model definition.}
We will study an MLP composed of $L$ successive linear transformations.
We consider a single input $\vx \in \R^{\width_0}$ with norm $\norm{\vx}_2 = \Theta(\sqrt{\width_0})$. The first hidden representation is $\vh_1(\vx) = \mW_1 \vx$, with the rest of the network following recursively as:
\begin{equation}
    \label{eqn:hk}
    \vh_{\ell}(\vx) = \mW_\ell \vh_{\ell-1}(\vx)
    \qquad
    \text{for } \ell = 2, \ldots, L.
\end{equation}
We let $\vh_L(\vx) \in \R^{\width_L}$ be the network output.
We will keep the input dimension $\width_0$ and output dimension $\width_L$ fixed and consider scaling with respect to the hidden dimensions $\width_1, \ldots, \width_{L \!-\! 1}$.

We let the global loss be $\L = g(\vh_L(\vx), \vy)$ where $g$ and $\vy$ are a loss function and target vector, respectively.
During training, we will take gradient steps at each layer as $\DW_\ell = - \eta_\ell \cdot \nabla_{\mW_\ell} \L$, where $\eta_\ell$ is a layerwise learning rate.
We will ultimately solve for the scale of $\eta_\ell$, but for now we will be content to discuss the perturbation $\DW_\ell$ directly.
By \cref{eqn:hk}, hidden vector updates at subsequent layers are related by:
\begin{equation}
    \label{eqn:delta_h}
    \vh_\ell(\vx) + \Delta \vh_\ell(\vx) = (\mW_\ell+\DW_\ell) (\vh_{\ell-1}(\vx) + \Delta \vh_{\ell-1}(\vx)).
\end{equation}

\textbf{Hidden vector sizes.}
To reiterate \cref{des:scaling}, we wish for features at the $\ell$th layer to have a norm which scales as $\norm{\vh_\ell(\vx)}_2 = \Theta(\sqrt{\width_\ell})$.
Upon a gradient update, this feature vector should undergo an update of size $\norm{\Delta \vh_\ell (\vx)}_2 = \Theta(\sqrt{\width_\ell})$.
We will show that \cref{cond:scaling} is sufficient to achieve these aims at all layers.

\textbf{Plan of attack.}
For simplicity, we will first focus on the \textit{first step of gradient descent} after random initialization.
We will argue recursively in depth, showing that if the features at layer $\ell-1$ and their updates satisfy \cref{des:scaling}, then so will those at layer $\ell$.
In order to verify the desired scalings, we will show upper and lower scaling bounds separately: we will first show that the features and their updates are not larger than asked by \cref{des:scaling}, and then show that they are in fact also not smaller than asked by \cref{des:scaling}.

\textbf{Hidden vector updates.}
By the subadditivity and submultiplicativity of the spectral norm, \cref{eqn:hk,eqn:delta_h} imply that:
\begin{align}
    \norm{\vh_{\ell}(\vx)}_2 &\le \norm{\mW_\ell}_* \cdot \norm{\vh_{\ell-1}(\vx)}_2 = \Theta(\sqrt{\width_\ell}); \\
    \norm{\Delta \vh_\ell(\vx)}_2 &\le
    \norm{\DW_\ell}_* \cdot \norm{\vh_{\ell-1}(\vx)}_2
    + \norm{\mW_\ell}_* \cdot \norm{\Delta \vh_{\ell-1}(\vx)}_2
    + \norm{\DW_\ell}_* \cdot \norm{\Delta \vh_{\ell-1}(\vx)}_2
    = \Theta(\sqrt{\width_\ell}), \label{eqn:dh_upper_bound}
\end{align}
where on the right hand sides of the inequality we have inserted \cref{des:scaling} and \cref{cond:scaling}.
The spectral scaling condition thus gives features and feature updates obeying the correct \textit{upper} bounds, and we need merely show comparable lower bounds.

\textbf{Tightness of bounds via matrix-vector alignment.}
The upper bound in the submultiplicativity property $\norm{\mA \vv}_2 \le \norm{\mA}_* \cdot \norm{\vv}_2$ can be very loose---in particular, this is the case when the vector only interacts with the small singular values in the matrix.
We will now show that this is not the case in deep network training, and that these upper bounds provide a fairly accurate description of the way things scale.
We make two observations regarding random weight matrices and gradient updates:
\begin{claim}[Alignment of initial weight matrices]\label{claim:init}
Fix a feature vector $\vh_{\ell-1}(\vx)\in\R^{\width_{\ell-1}}$.
Assume $\mW_\ell$ in $\smash{\R^{\width_\ell\times \width_{\ell-1}}}$ is sampled using a common weight initialization strategy (e.g., Gaussian or semi-orthogonal init). Provided that fan-out is no less than fan-in
($\width_\ell \ge \width_{\ell-1}$),
then with high probability:
    \begin{equation*}
        \norm{ \mW_{\ell}\vh_{\ell-1}(\vx)}_2 = \Theta (\norm{\mW_{\ell}}_* \cdot \norm{\vh_{\ell-1}(\vx)}_2).
    \end{equation*}
\end{claim}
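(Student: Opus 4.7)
The plan is to split the claimed equality into matching upper and lower bounds. The upper bound $\norm{\mW_\ell \vh_{\ell-1}(\vx)}_2 \le \norm{\mW_\ell}_* \cdot \norm{\vh_{\ell-1}(\vx)}_2$ is just the submultiplicativity property reviewed in \cref{sec:preliminaries} and involves no randomness, so the entire substance of the claim is the matching lower bound $\norm{\mW_\ell \vh_{\ell-1}(\vx)}_2 = \Omega(\norm{\mW_\ell}_* \cdot \norm{\vh_{\ell-1}(\vx)}_2)$. I would prove this lower bound separately for the two named initialization families, since the arguments differ in character.

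For the Gaussian case, take $\mW_\ell$ to have i.i.d.\ $N(0,\sigma^2)$ entries for some $\sigma^2$ (the particular value depends on the scheme and does not affect the scaling argument). The joint distribution of $\mW_\ell$ is invariant under right-multiplication by orthogonal matrices, so without loss of generality I may rotate coordinates on $\R^{\width_{\ell-1}}$ so that $\vh_{\ell-1}(\vx)$ is supported on a single coordinate. Then $\mW_\ell \vh_{\ell-1}(\vx)$ equals $\norm{\vh_{\ell-1}(\vx)}_2$ times one column of $\mW_\ell$, a spherical Gaussian vector in $\R^{\width_\ell}$ whose squared norm follows a $\sigma^2 \norm{\vh_{\ell-1}(\vx)}_2^2$-scaled chi-squared distribution with $\width_\ell$ degrees of freedom. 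Standard chi-squared concentration then yields $\norm{\mW_\ell \vh_{\ell-1}(\vx)}_2 = \Theta(\sigma \sqrt{\width_\ell} \cdot \norm{\vh_{\ell-1}(\vx)}_2)$ with high probability. Independently, a classical nonasymptotic bound on the top singular value of a rectangular Gaussian matrix (Bai--Yin / Davidson--Szarek) gives $\norm{\mW_\ell}_* = \Theta(\sigma(\sqrt{\width_\ell} + \sqrt{\width_{\ell-1}}))$ with high probability. Under the hypothesis $\width_\ell \ge \width_{\ell-1}$ this collapses to $\Theta(\sigma \sqrt{\width_\ell})$, so the ratio of the two estimates is $\Theta(1)$ as required.

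For the semi-orthogonal case, $\mW_\ell$ is a scaled semi-orthogonal matrix satisfying $\mW_\ell\T \mW_\ell = c^2 \mI_{\width_{\ell-1}}$ for some $c > 0$; such matrices exist precisely because of the hypothesis $\width_\ell \ge \width_{\ell-1}$. Then $\norm{\mW_\ell \vh_{\ell-1}(\vx)}_2^2 = \vh_{\ell-1}(\vx)\T \mW_\ell\T \mW_\ell \vh_{\ell-1}(\vx) = c^2 \norm{\vh_{\ell-1}(\vx)}_2^2$ while $\norm{\mW_\ell}_* = c$, so the conclusion in fact holds as a deterministic equality with constant one. The technical burden thus sits entirely in the Gaussian lower bound, and the fan-out $\ge$ fan-in hypothesis is sharp exactly there: if $\width_\ell \ll \width_{\ell-1}$ then $\norm{\mW_\ell}_*$ would be set by $\sigma \sqrt{\width_{\ell-1}}$ while any fixed input direction can extract only $\Theta(\sigma \sqrt{\width_\ell})$ of output norm, so the two sides would genuinely disagree in scale and the claim would fail.
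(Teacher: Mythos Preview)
Your proposal is correct and follows essentially the same route as the paper: treat the semi-orthogonal case as a deterministic isometry, and for the Gaussian case combine the concentration estimate $\norm{\mW_\ell \vh_{\ell-1}(\vx)}_2 \approx \sigma\sqrt{\width_\ell}\norm{\vh_{\ell-1}(\vx)}_2$ with the random-matrix bound $\norm{\mW_\ell}_* \approx \sigma(\sqrt{\width_\ell}+\sqrt{\width_{\ell-1}})$, using $\width_\ell \ge \width_{\ell-1}$ to match the scales. Your rotation-invariance reduction to a single Gaussian column is just a slightly more explicit version of the paper's law-of-large-numbers appeal, and your closing remark on why the fan-out hypothesis is sharp is a nice addition not present in the paper.
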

\begin{claim}[Alignment of updates]
\label{claim:update}
For an update $\DW_\ell$ given by gradient descent with batch size 1,
    \begin{equation*}
        \norm{ \Delta\mW_{\ell}\vh_{\ell-1}(\vx)}_2 = \norm{\DW_{\ell}}_* \cdot \norm{\vh_{\ell-1}(\vx)}_2.
    \end{equation*}
\end{claim}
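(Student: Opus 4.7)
\textbf{Proof plan for \cref{claim:update}.} The core observation is that with batch size one, the gradient update at a single layer is a \emph{rank-one} matrix whose right factor is precisely $\vh_{\ell-1}(\vx)$, so the layer input is automatically perfectly aligned with the top right singular direction of $\Delta \mW_\ell$. This turns the submultiplicative inequality into an equality.

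First, I would make the rank-one structure explicit via the chain rule. Writing $\vh_\ell(\vx) = \mW_\ell \vh_{\ell-1}(\vx)$, the derivative of the scalar loss $\L$ with respect to the matrix $\mW_\ell$ factors as
\begin{equation*}
    \nabla_{\mW_\ell} \L \;=\; \vg_\ell \, \vh_{\ell-1}(\vx)\T, \qquad \vg_\ell \defeq \nabla_{\vh_\ell} \L \in \R^{\width_\ell},
\end{equation*}
which is the standard outer-product form of the gradient of a linear layer. Hence
\begin{equation*}
    \DW_\ell \;=\; -\eta_\ell \, \nabla_{\mW_\ell}\L \;=\; -\eta_\ell\, \vg_\ell\, \vh_{\ell-1}(\vx)\T
\end{equation*}
is a rank-one matrix.

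Next, I would invoke the rank-one identity from \cref{sec:preliminaries}: for $\mA = \vu \vv\T$, one has $\norm{\mA}_* = \norm{\vu}_2 \cdot \norm{\vv}_2$. Applied here,
\begin{equation*}
    \norm{\DW_\ell}_* \;=\; \eta_\ell\, \norm{\vg_\ell}_2 \cdot \norm{\vh_{\ell-1}(\vx)}_2.
\end{equation*}
On the other hand, applying $\DW_\ell$ to $\vh_{\ell-1}(\vx)$ directly gives
\begin{equation*}
    \DW_\ell\, \vh_{\ell-1}(\vx) \;=\; -\eta_\ell\, \vg_\ell\, \bigl(\vh_{\ell-1}(\vx)\T \vh_{\ell-1}(\vx)\bigr) \;=\; -\eta_\ell\, \norm{\vh_{\ell-1}(\vx)}_2^2\, \vg_\ell,
\end{equation*}
so $\norm{\DW_\ell\, \vh_{\ell-1}(\vx)}_2 = \eta_\ell\, \norm{\vg_\ell}_2 \cdot \norm{\vh_{\ell-1}(\vx)}_2^2$. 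Comparing with the previous display yields the claimed equality $\norm{\DW_\ell\,\vh_{\ell-1}(\vx)}_2 = \norm{\DW_\ell}_* \cdot \norm{\vh_{\ell-1}(\vx)}_2$.

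There is essentially no obstacle here; the only thing to be careful about is that the statement is an \emph{exact} equality (not a $\Theta(\cdot)$ scaling as in \cref{claim:init}), and that this is a structural consequence of batch size one — it would fail in general for larger batches, where the gradient becomes a sum of rank-one outer products with potentially unrelated left factors, and the layer input need not align with the top right singular vector of the sum. I would flag this point at the end of the proof since it clarifies the role of the ``batch size 1'' qualifier in the statement and foreshadows why the multi-input extension later in the paper needs a separate argument.
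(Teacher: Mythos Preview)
Your proposal is correct and follows essentially the same argument as the paper: both identify the batch-size-one gradient as the rank-one outer product $\DW_\ell = -\eta_\ell\,\nabla_{\vh_\ell(\vx)}\L\,\vh_{\ell-1}(\vx)\T$, compute $\norm{\DW_\ell \vh_{\ell-1}(\vx)}_2 = \eta_\ell\,\norm{\nabla_{\vh_\ell(\vx)}\L}_2\,\norm{\vh_{\ell-1}(\vx)}_2^2$, and recognize this as $\norm{\DW_\ell}_*\cdot\norm{\vh_{\ell-1}(\vx)}_2$ via the rank-one spectral norm identity. Your closing remark on why the exact equality breaks for larger batches is a welcome addition that the paper defers to its later discussion.
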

In words, \cref{claim:init} states that random hidden weight matrices scale incoming vectors by factors commensurate to their spectral norms, so long as their fan-out is not smaller than their fan-in, which we will assume is the case for all but the final layer of the network.
\cref{claim:update} states the same of weight updates, but the proportionality constant is precisely one and requires no condition on dimensionality.%
\footnote{Note that $\vx$ in \cref{claim:update} is the same input that induced the gradient $\Delta\mW_{\ell}$ in the previous step. \cref{claim:update} is generally not an equality if this not true.}

We now justify these claims in turn.
For \cref{claim:init}, first suppose that $\mW_\ell$ is a random semi-orthogonal matrix as is a popular initialization strategy. Then all singular values of $\mW_\ell$ are one and, since fan-out exceeds fan-in, the null-space of $\mW_\ell$ is empty. Taken together, these observations imply the equality: $\norm{ \mW_{\ell}\vh_{\ell-1}(\vx)}_2 = \norm{\mW_{\ell}}_* \cdot \norm{\vh_{\ell-1}(\vx)}_2$.
Fortunately, if the elements of $\mW_\ell$ are instead sampled i.i.d.\ from a centered Gaussian distribution with standard deviation $\sigma_\ell$, then the situation is similar.
It is easily shown by the law of large numbers that
$\norm{\mW_\ell \vh_{\ell-1}(\vx)}_2 \approx \sigma_\ell \sqrt{\width_\ell} \norm{\vh_{\ell-1}(\vx)}_2$,
and it is a standard result in random matrix theory that
$\norm{\mW_\ell}_* \approx \sigma_\ell (\sqrt{\width_{\ell-1}} + \sqrt{\width_\ell})$
\citep{rudelson, vershynin_2018}.
\cref{claim:init} for Gaussian intialization follows by combining these results.

\cref{claim:update} is easily verified as follows.
Observe that we can write the update at layer $\ell$ as the outer-product:
\begin{equation} \label{eqn:DW}
    \DW_\ell = -\eta_\ell \cdot \nabla_{\vh_{\ell}(\vx)}\el \cdot \vh_{\ell-1}(\vx)^\top.
\end{equation}
So the update $\DW_\ell$ is rank-one with right singular vector $\vh_{\ell-1}(\vx)$.
To verify the claim, observe that:
\begin{equation}
    \norm{\DW_\ell \vh_{\ell-1}(\vx)}_2 = \eta_\ell \cdot
    \norm{\nabla_{\vh_{\ell}(\vx)}\el}_2
    \cdot \norm{\vh_{\ell-1}(\vx)}_2^2
    = \norm{\DW_\ell}_* \cdot \norm{\vh_{\ell-1}(\vx)}_2.
\end{equation}

With the claims established, we can now get lower bounds on hidden vector size which serve to verify \cref{des:scaling}.
The features at initialization scale correctly as:
\begin{equation} \label{eqn:vh_l_tight_scaling}
    \norm{\vh_\ell(\vx)}_2
    = \Theta \left( \norm{\mW_\ell}_* \cdot \norm{\vh_{\ell-1}(\vx)}_2 \right)
    = \Theta(\sqrt{\width_\ell}),
\end{equation}
where we have first used \cref{claim:init} and then inserted \cref{cond:scaling}.
To bound the size of $\Delta \vh_\ell(\vx)$, let us first observe from \cref{eqn:delta_h} that
\begin{equation}
\Delta \vh_\ell(\vx) = \DW_\ell \vh_{\ell-1}(\vx) + \mW_\ell \Delta \vh_{\ell-1}(\vx) + \DW_\ell \Delta \vh_{\ell-1}(\vx).
\end{equation}
So long as the first term $\DW_\ell \vh_{\ell-1}(\vx)$ does not perfectly cancel with the latter two, we have that:
\begin{equation}\label{eq:lower}
    \norm{\Delta \vh_\ell(\vx)}_2
    = \Omega ( \norm{\DW_\ell}_* \cdot \norm{\vh_{\ell-1}}_2 )
    = \Omega( \sqrt{\width_\ell} ),
\end{equation}
where in the last step we have inserted \cref{cond:scaling}.
Combining \cref{eq:lower} with our matching upper bound from \cref{eqn:dh_upper_bound},
we conclude that $\norm{\Delta \vh_\ell(\vx)}_2 = \Theta(\sqrt{\width_\ell})$ as desired.

We have achieved both clauses of Desideratum \ref{des:scaling} at layer $\ell$, completing a recursive step from layer $\ell-1$.
The norm of the input $\vx$ is by assumption the correct size to serve as a base case, and thus we recursively have the correct feature scaling at all layers.

\subsubsection{Key intuitions}
We now pause to discuss key intuitions from the above argument which will carry through to the general case.

\textbf{Weight updates are low-rank and aligned.}
An important observation is that weight updates are highly structured: they have low rank and align to incoming vectors.
This motivates the spectral norm (which is the degree by which a matrix scales a ``perfectly aligned'' vector) as the correct measure of size.

\textbf{Spectral variables enable simpler scaling analysis.}
Most prior work on hyperparameter scaling \citep{yang:2021-tensor-programs-IV,yaida:2022-hyperparameter-scaling-families} discusses layerwise initialization scales $\{\sigma_\ell\}$ and learning rates $\{\eta_\ell\}$ as the primary variables---although there are exceptions 
\citep{my-fromage}.
By contrast, we work directly in terms of quantities that these hyperparameters regulate: the spectral norms of $\mW_\ell$ and $\DW_\ell$.
This enables us to determine the sizes of hidden vectors and their updates quite easily, whereas the same calculation in terms of $\sigma_\ell$ and $\eta_\ell$ is more involved.
\cref{sec:implementation} shows how to recover $\sigma_\ell$ and $\eta_\ell$ from our spectral scaling condition.

\subsection{Extensions: additional gradient steps, nonlinearities, and multiple examples}

We now extend our warmup example to successively more complex settings, ultimately recovering the general case.
As we add back complexity, our spectral scaling condition will remain sufficient to achieve feature evolution of the proper size, and key intuitions from our warmup will continue to hold up to minor modifications.
Each extension requires making a natural assumption. We empirically verify these assumptions for a deep MLP in \cref{app:checking_assumptions}.

\subsubsection{Additional gradient steps}

Our warmup argument relied on two properties of $\mW_\ell$ at initialization: its spectral norm being the correct size (\cref{cond:scaling}) and it passing forward features $\vh_\ell(\vx)$ of the correct size (\cref{claim:update}).
Fortunately, it is easily seen that these two properties remain true of $\mW_\ell$ after a gradient step, and we can therefore treat the second (and later) steps exactly as we did the first.
This follows quickly given the following assumption.
\begin{assumption} \label{assumption_1}
Updates do not perfectly cancel initial quantities.
That is:
\begin{align}
    \norm{\mW_\ell + \DW_\ell}_*
    &=
    \Theta \left( \norm{\mW_\ell}_* + \norm{\DW_\ell}_* \right) \\
    \norm{\vh_\ell(\vx) + \Delta \vh_\ell(\vx)}_2
    &=
    \Theta(\norm{\vh_\ell(\vx)}_2 + \norm{\Delta \vh_\ell(\vx)}_2).
\end{align}
\end{assumption}
The sort of perfect cancellation required to violate this assumption will be rare in practice (and adding a small amount of randomness to the learning rate $\eta_\ell$ will fix any occurrence with high probability).
It follows that
$\norm{\mW_\ell + \DW_\ell}_* = \Theta( \sqrt{\width_\ell / \width_{\ell-1}})$
and
$\norm{\vh_\ell(\vx) + \Delta \vh_\ell(\vx)}_2 = \Theta(\sqrt{\width_\ell}).$
With these facts in place, the same argument we used in \cref{subsec:warmup} for the first step ensures that \cref{des:scaling} also holds at later steps.

\subsubsection{Nonlinearities}

We now add a nonlinearity $\phi$ to each layer of our MLP.
The modified forward recursion relation is:
\begin{equation}
    \vh_\ell(\vx) = \mW_\ell \vh'_{\ell-1}(\vx),
    \qquad
    \vh'_\ell(\vx) = \phi(\vh_\ell(\vx));
    \qquad
    \text{for } \ell = 2, \ldots, L\!-\!1,
\end{equation}
with base case $\vh_1(\vx) = \mW_1 \vx$ and output $\vh_L(\vx) = \mW_L \vh'_{L-1}(\vx)$.
We assume that the hidden features before and after the application of the nonlinearity are of the same scale:
\begin{assumption} \label{assumption_2}
    $\norm{\vh'_\ell(\vx)}_2 = \Theta\left(\norm{\vh_\ell(\vx)}_2\right).$
\end{assumption}
This is the expected behavior for most activation functions (which are designed to take in order-one inputs and return outputs which neither explode nor uniformly vanish) and seems like a reasonable assumption.
As in the linear case, $\DW_\ell$ will be rank-one and align to incoming signal as:
\begin{equation} \label{eqn:nonlin_update}
    \norm{\DW_\ell \vh'_{\ell-1}(\vx)}_2 = \norm{\DW_\ell}_* \cdot \norm{\vh'_{\ell-1}(\vx)}_2
\end{equation}
at each step.
All our scaling arguments from the linear case therefore carry through:
the term $\DW_\ell \vh'_{\ell-1}(\vx)$ is sufficient to induce a change $\Delta \vh_\ell(\vx)$ satisfying \cref{des:scaling}.
(The other terms which depend on $\Delta \vh'_{\ell-1}(\vx)$ will be no larger.)
\cref{cond:scaling} therefore still achieves correctly-scaled feature evolution.

\subsubsection{Batch size greater than one}

When training on a minibatch $\mathcal{D} = \{(\vx_i, \vy_i)\}_{i=1}^B$ with size $B > 1$, each gradient step is simply an average of the steps on each example:
\begin{equation} \label{eqn:DWk_sum}
    \DW_\ell = \frac{1}{B} \sum_{i=1}^B \DW_\ell^{(i)},
\end{equation}
where the sum runs over the minibatch index and $\DW_\ell^{(i)}$ denotes the update that would result from a step on the single example $(\vx_i, \vy_i)$.
While $\DW_\ell$ is generally no longer rank one and cannot perfectly align to all $B$ incoming vectors, Equation \ref{eqn:DWk_sum} makes it clear that at least one summand will align to each $\vh_\ell(\vx_i)$ from the batch.
We first assume that this term is not perfectly cancelled by the others:
\begin{assumption} \label{assumption_3}
    $\norm{\DW_\ell \vh_\ell(\vx_i)}_2 =
    \Theta(
    |\!|
    \frac{1}{B}
    \DW_\ell^{(i)} \vh_\ell(\vx_i) |\!|_2
    )$.
\end{assumption}
We additionally make the assumption that the batch size is fixed and independent of width:
\begin{assumption} \label{assumption_4}
The batch size is width-independent: $B = \Theta(1)$.
\end{assumption}

Combining these assumptions, we find that
$\norm{\DW_\ell \vh_\ell(\vx_i)}_2
= \Theta \left(
\norm{\DW_\ell}_* \cdot
\norm{\vh_\ell(\vx_i)}_2
\right)$: the batch update is aligned with incoming signal in a scaling sense.
Our previous scaling arguments in fact needed only alignment in this ``big-$\Theta$'' sense (as opposed to the perfect rank-one sense of \cref{claim:update}), and thus they still carry through: our spectral scaling condition continues to suffice to achieve proper feature learning as per \cref{des:scaling}.

\textbf{Empirical observation: low-rank structure remains at large batch size.}
Surprisingly, we observe numerically that MLP updates remain low (effective) rank and aligned with incoming vectors \textit{even at large batch size} $B$.
This is demonstrated in Figure \ref{fig:batch_scaling}.


\begin{figure}
  \centering
  \includegraphics[width=\textwidth]{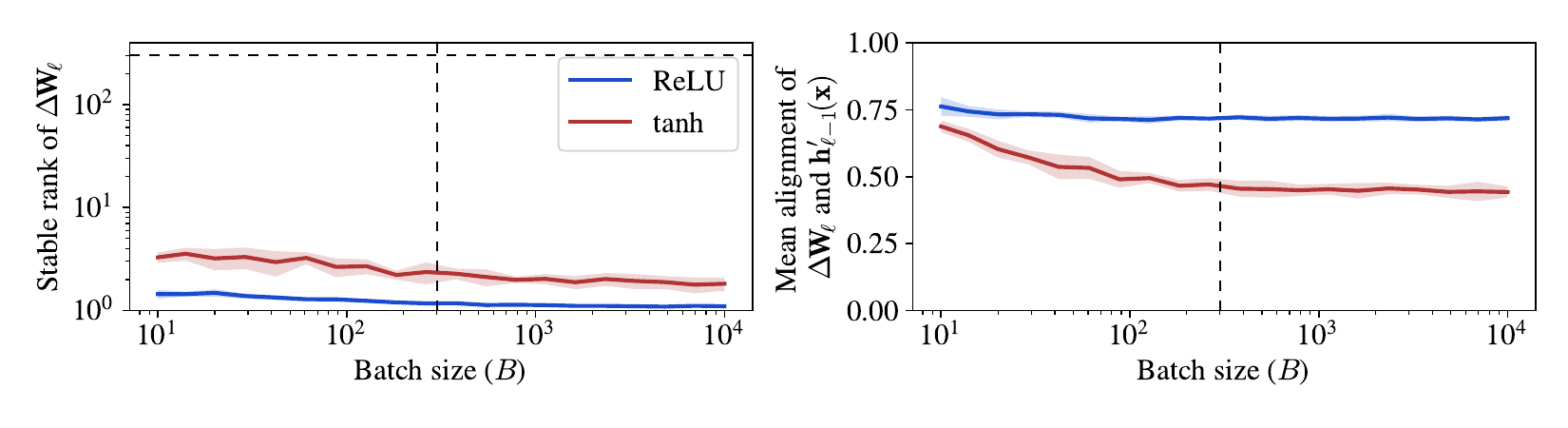}
  \caption{
    \textbf{Gradient updates have low effective rank and high-alignment with incoming hidden vectors even at large batch size in MLPs on CIFAR-10.}
    We randomly initialize MLPs with depth $L=3$, hidden widths $\width_1 = \width_2 = 300$, and $\textrm{ReLU}$ and $\textrm{tanh}$ activation functions.
    We then compute gradient updates $\DW_\ell$ for layer $\ell = 2$ on randomly-sampled size-$B$ subsets of CIFAR-10.
    \textbf{Left.}
    As a measure of effective rank, we report the \textit{stable rank}
    $\textrm{srank}(\DW_\ell) \defeq \norm{\DW_\ell}_F^2 / \norm{\DW_\ell}_*^2$
    of gradient updates.
    The stable rank remains less than 10 even when $B$ is large, which is much less than its maximal possible value of $\min(\width_1, \width_2) = 300$.
    \textbf{Right.}
    We report the average alignment $\norm{\DW_\ell \vh_{\ell-1}(\vx)}_2 / \norm{\DW_\ell}_* \norm{\vh'_{\ell-1}(\vx)}_2$ of the weight update $\DW_\ell$ to the incoming vector $\vh'_{\ell-1}(\vx)$, averaged over $\vx$ from the batch.
    Observe that alignment does not decay substantially with large batch size.
    Dashed lines on both axes in both subplots show the network width.
    Shaded regions denote one standard deviation of variation over random initializations and batches.
    Curves look similar after training.
    See \cref{app:exp} for experimental details.
  }
  \label{fig:batch_scaling}
\end{figure}

\subsection{Adam and other adaptive optimizers}

Adam (like most adaptive optimizers in deep learning) processes gradients into updates via an entrywise function.
For example, the momentum-less version of Adam is SignSGD \citep{bernstein_signsgd_2018} which just applies the sign function to each gradient entry.
Via less elementary arguments detailed in \cref{sec:formal_theory}, all the above discussion holds for $\Delta \mW_\ell$ calculated from these optimizers.
In a gist, the main ingredients are 1) the nontrivial insight from \emph{Tensor Programs} that gradients look like outer products of iid vectors, and 2) the fact that entrywise processing preserves the Frobenius norm of such a matrix up to multiplicative constants when width is large.

\subsection{Uniqueness of spectral scaling condition}
Technically, \cref{des:scaling} can be satisfied by scalings other than \cref{cond:scaling}.
For example, if one implements \cref{cond:scaling} at the first layer but sets $\norm{\DW_\ell}_* = 0$ for layers $\ell = 2,\ldots, L$, then \cref{des:scaling} still holds.
However, \cref{cond:scaling} is the unique \emph{maximal} scaling: if any of $\norm{\DW_\ell}_*$ or $\norm{\mW_\ell}_*$ exceeds \cref{cond:scaling}, then training will blow up as width is increased.
See \cref{sec:formal_theory} for a proof.

\renewcommand{\arraystretch}{1.5}
\begin{table}[ht]
    \centering
    \label{tab:norm_conversion}
    \begin{tabular}{ccccccc}
    \toprule
    \textbf{Matrix} & \textbf{Shape} & \textbf{Stable Rank} & \textbf{Spectral Norm} & \textbf{Frobenius Norm} & \textbf{RMS Entry Size} \\[0.25ex]
    \midrule
    $\mW_\ell$ & $\width \times \width$ & $\width$ & $1$ & $\sqrt{\width}$ & $\sqrt{1/\width}$\\
    $\DW_\ell$ & $\width \times \width$ & $1$ & $1$ & $1$ & $1/\width$\\
    \midrule
    $\mW_\ell$ & $1 \times \width$ & $1$ & $\sqrt{1/\width}$ & $\sqrt{1/\width}$ & $1/\width$\\
    $\DW_\ell$ & $1 \times \width$ & $1$ & $\sqrt{1/\width}$ & $\sqrt{1/\width}$ & $1/\width$\\
    \midrule
    $\mW_\ell$ & $\width_\ell \times \width_{\ell-1}$ & $\min(\width_\ell, \width_{\ell-1})$ & $\sqrt{\frac{\width_\ell}{\width_{\ell-1}}}$ & $\sqrt{\min(\width_\ell, \width_{\ell-1})\cdot\frac{\width_\ell}{\width_{\ell-1}}}$ & $\sqrt{\frac{\min(\width_\ell, \width_{\ell-1})}{\width_\ell \times \width_{\ell-1}} \cdot\frac{\width_\ell}{\width_{\ell-1}}}$\\
    $\DW_\ell$ & $\width_\ell \times \width_{\ell-1}$ & $1$ & $\sqrt{\frac{\width_\ell}{\width_{\ell-1}}}$ & $\sqrt{\frac{\width_\ell}{\width_{\ell-1}}}$ & $\sqrt{\frac{1}{\width_\ell \times \width_{\ell-1}}\cdot\frac{\width_\ell}{\width_{\ell-1}}}$\\
    \bottomrule
    \\
    \end{tabular}
    \caption{\textbf{Conversion between matrix norms for initial weights $\mW_\ell$ and updates $\DW_\ell$}. Results are first given for square matrices and vectors, then finally generalized to matrices of all shapes. Table entries denote the $\Theta$-scaling of each quantity. To obtain the Frobenius norm, one multiplies the spectral norm by the square root of the stable rank. To obtain the root-mean-square entry size, one divides the Frobenius norm by the square root of the number of entries. Notice that a $n\times \width$ initial matrix $\mW_\ell$ with $\Theta(1)$ spectral norm has $\Theta(\sqrt{1/\width})$ entry size, whereas a $n\times \width$ update $\DW_\ell$ with $\Theta(1)$ spectral norm has only $\Theta(1/\width)$ entry size, which is a factor $\sqrt{\width}$ smaller! The truth of the statement ``the weights move only a negligible amount in wide neural networks'' thus depends on whether this change is measured in entry size or spectral norm.}
\end{table}
\renewcommand{\arraystretch}{1}

\section{Efficient implementation of the spectral scaling condition}
\label{sec:implementation}

We have thus far proposed the \textit{spectral scaling condition} (\cref{cond:scaling}) and argued in \cref{sec:warmup_and_extensions} that this condition induces feature learning.
We now explain how this condition can be implemented in practice. Several implementation strategies are possible because relations exist between different notions of matrix norm (\cref{tab:norm_conversion} provides a summary). 
In this section, we first discuss an implementation via direct spectral normalization, and ultimately recover a more standard setup in which each layer's weight matrix is controlled by an initialization scale $\sigma_\ell$ and a (SGD) learning rate $\eta_\ell$ as follows:

\begin{tcolorbox}[colback=white, colframe=black, boxrule=1pt, arc=0pt]
\begin{parametrization}[Spectral parametrization]
\label{pzn:spectral_pzn}
We claim that the spectral scaling condition (\cref{cond:scaling}) is satisfied and feature learning is achieved (as per \cref{des:scaling}) if the initialization scale and learning rate of each layer $\ell$ are chosen according to:
\begin{equation*}
    \sigma_\ell =
    \Theta
    \left(
    \frac{1}{\sqrt{\width_{\ell-1}}}
    \min
    \left\{
    1,
    \sqrt{\frac{\width_\ell}{\width_{\ell-1}}}
    \right\}
    \right);
    \qquad
    \qquad
    \eta_\ell = \Theta \left( \frac{\width_{\ell}}{\width_{\ell-1}} \right).
\end{equation*}
\end{parametrization}
\end{tcolorbox}


\textbf{Na\"{i}ve method: direct spectral normalization.}
The most na\"{i}ve way to impose \cref{cond:scaling} is to directly normalize the relevant quantities by their spectral norm.
For instance, to initialize the weight matrix $\smash{\mW_\ell\in\R^{\width_\ell\times \width_{\ell-1}}}$ at the $\ell$th layer to have spectral norm $\smash{\sqrt{\width_\ell/\width_{\ell-1}}}$, one could sample a temporary matrix $\mW^\prime_\ell$ using any standard initializer and then re-normalize according to:
\begin{align}\label{eq:spectral-init}
    \mW_\ell = \sigma \sqrt{\frac{\width_\ell}{\width_{\ell-1}}} \times \frac{\mW^\prime_\ell}{\norm{\mW^\prime_\ell}_*},
\end{align}
where $\sigma = \Theta(1)$ is a width-independent prefactor.
Similarly, to ensure that the gradient step $\DW_\ell$ also has a spectral norm of the proper size, one might spectrally normalize the gradient according to:
\begin{equation} \label{eq:spectral-update}
    \DW_\ell = - \eta \sqrt{\frac{\width_\ell}{\width_{\ell-1}}} \times \frac{\nabla_{\mW_\ell}\mathcal{L}}{\norm{\nabla_{\mW_\ell}\mathcal{L}}_*},
\end{equation}
where $\eta = \Theta(1)$ is a width-independent prefactor.

It is quick to check that normalizing as in \cref{eq:spectral-init,eq:spectral-update} will satisfy \cref{cond:scaling}, but computing spectral norms is expensive and (as we will now show) can be avoided entirely by working out how $\norm{\mW_\ell'}_*$ and $\norm{\nabla_{\mW_\ell}\mathcal{L}}_*$ will scale and dividing by the appropriate factor.

\textbf{Random initialization.}
Let us suppose that, as is common practice, $\mW_\ell$ is initialized as $\mW_\ell = \sigma_\ell \cdot \mW'_\ell$, where all elements of $\mW_\ell'$ are initialized i.i.d. from a normal distribution with mean zero and unit variance.
The spectral norm of a matrix thus constructed is roughly $\norm{\mW_\ell}_* \approx \sigma_\ell \cdot (\sqrt{\width_\ell} + \sqrt{\width_{\ell-1}})$ \citep{rudelson,vershynin_2018}.
To get the desired scaling $\smash{\norm{\mW_\ell}_* = \Theta(\sqrt{\width_\ell / \width_{\ell-1}})}$, we need merely choose
$\sigma_\ell = \Theta ( \sqrt{\width_\ell / \width_{\ell-1}} \cdot (\sqrt{\width_\ell} + \sqrt{\width_{\ell-1}})^{-1} )$.
Simplifying within the $\Theta(\cdot)$, we arrive at $\sigma_\ell$ scaled as in the spectral parametrization (\cref{pzn:spectral_pzn}).
Initializing weights with a prefactor $\sigma_\ell$ scaling in this manner achieves the correct spectral norm of $\mW_\ell$.
We note that the constant factor suppressed by the $\Theta(\cdot)$ here will usually be small---for example, a prefactor of $\sqrt{2}$ agrees with typical practice for ReLU networks at most layers.
If $\mW_\ell'$ is instead a random semi-orthogonal matrix, then we can simply use a prefactor $\sigma_\ell = \Theta(\sqrt{\width_\ell / \width_{\ell-1}})$.

\textbf{Gradient updates.}
Here we give two methods for obtaining weight updates $\DW_\ell$ with the correct spectral norm.
The first method is to note that, for a matrix with low stable rank, the Frobenius norm scales like the spectral norm (and is cheap to compute), so we may simply approximate $\norm{\nabla_{\mW_\ell}\mathcal{L}}_* \approx \norm{\nabla_{\mW_\ell}\mathcal{L}}_F$ and use \cref{eq:spectral-update} directly.
This approach is useful if one wants to avoid worrying about width-scaling pre-factors entirely.
The second method---on which we will spend more time---is to make standard updates
\begin{equation} \label{eqn:gra\width_update}
    \DW_\ell = - \eta_\ell \nabla_{\mW_\ell}\mathcal{L}
\end{equation}
and find a scaling of $\eta_\ell$ such that $\smash{\norm{\DW_\ell}_* = \Theta(\sqrt{\width_\ell / \width_{\ell-1}})}$.

The main challenge lies in finding the scaling of the gradient $\norm{\nabla_{\mW_\ell}\mathcal{L}}_*$.
Note that we expect each gradient update $\DW_\ell$ to induce a change $\norm{\Delta \vh_L(\vx)}_2 = \Theta(\sqrt{\width_L})$ in the output which induces a change $\Delta \mathcal{L} = \Theta(1)$ for common loss functions $\mathcal{L}$. Taylor expanding the loss to first order, we also expect that
\begin{equation}
    \Delta \mathcal{L}
    = \Theta (
    \langle \DW_\ell,
    \nabla_{\mW_\ell}\mathcal{L} \rangle)
    = \Theta (\norm{\DW_\ell}_F \cdot \norm{\nabla_{\mW_\ell}\mathcal{L}}_F)
    = \Theta \left(
    \norm{\DW_\ell}_* \cdot \norm{\nabla_{\mW_\ell}\mathcal{L}}_*
    \right),
\end{equation}
where $\langle \cdot , \cdot \rangle$ denotes the trace inner product and we have used the facts that the two arguments of the inner product are (a) proportional to each other and (b) low-rank.
Inserting $\Delta \mathcal{L} = \Theta(1)$ and the spectral scaling condition $\norm{\DW_\ell}_* = \Theta(\sqrt{\width_\ell / \width_{\ell-1}})$, we can conclude that
\begin{equation}
    \norm{\nabla_{\mW_\ell}\mathcal{L}}_* = \Theta(\sqrt{\width_{\ell-1} / \width_{\ell}}).
\end{equation}

This result may also be reached via direct layerwise-recursive analysis of the size of the gradient.
Returning to \cref{eqn:gra\width_update}, we now see that we achieve a properly-scaled update $\norm{\DW_\ell}_* = \Theta(\sqrt{\width_\ell / \width_{\ell-1}})$ if we take
$\eta_\ell = \Theta \left( \width_{\ell} / \width_{\ell-1} \right)$ as prescribed by \cref{pzn:spectral_pzn}.

In summary: training with layerwise initialization $\sigma_\ell$ and learning rate $\eta_\ell$ scaled as in our \cref{pzn:spectral_pzn} will implement the spectral scaling condition
and give features and feature evolution of the correct size.

\section{Comparisons to existing parametrizations}
\label{sec:comparisons}



We have shown that training in accordance with our \textit{spectral scaling condition} (\cref{cond:scaling}) at every layer suffices to achieve correctly-scaled feature evolution, and we have given width scalings for layerwise hyperparameters $\sigma_\ell$ and $\eta_\ell$ that suffice to put this condition into action in a standard deep learning paradigm (\cref{pzn:spectral_pzn}).
Here we compare this ``spectral parametrization''  with popular parametrizations.
We find that our parametrization recovers the ``maximal update parametrization'' (\muP{}) at all layers and is different to other parametrizations.

\subsection{Comparison with ``maximal update parametrization''}

Maximal update parametrization (\muP{}) was recently proposed as a scaling rule that retains feature learning even at infinite width. \muP{} as given in Table 3 of \citet{yang2021tuning} may be recovered from 
our \cref{pzn:spectral_pzn} by setting $\width_0 = \width_L = 1$ and $\width_1=\width_2=...=\width_{L-1}$. Our \cref{pzn:spectral_pzn} actually streamlines and generalizes \muP{}: we provide a unifying treatment for any rectangular matrix, rather than treating input, hidden and output layers separately. In other words, from a spectral point of view, no layer is special. Our parametrization also includes scaling with respect to the input dimension $\width_0$ and output dimension $\width_L$ (as opposed to neglecting them as agnostic $\Theta(1)$ quantities) and treats hidden widths of unequal dimension ($\width_{\ell-1} \neq \width_\ell$).

\subsection{Contrast to ``standard parametrization''}

At present, the vast majority of deep learning systems use either ``Kaiming,'' ``Xavier,'' or ``LeCun'' initialization \citep{he:2015-delving-deep-into-rectifiers, glorot:2010-difficulty-of-training-deep-nets, lecun:2002-backprop-tricks} with layer-independent learning rates. Generically, we refer to this as ``standard parametrization'' (SP), where layerwise initialization and learning rates scale as:
\begin{equation}\label{eq:sp}
    \sigma_\ell = \Theta(1 / \sqrt{\width_{\ell-1}})
    \qquad
    \text{and}
    \qquad
    \eta_\ell = \Theta(1).
\end{equation}
Notice that SP initialization exceeds \cref{pzn:spectral_pzn} in any layer with fan-out smaller than fan-in. This includes the final layer in sufficiently wide networks. So, while \cref{pzn:spectral_pzn} implies that weight matrices have spectral norm $\Theta(\sqrt{\texttt{fan-out}/\texttt{fan-in}})$ at initialization, under SP the spectral norms of certain layers are initialized larger than this. This means that, under SP, network outputs can blow up if training aligns the layer inputs with the top singular subspaces (and in fact this alignment generally occurs).

\subsection{Contrast to ``neural tangent parametrization''}

The neural tangent parametrization (NTP) of \citet{jacot:2018} parameterizes a weight matrix as $\mW_\ell / \sqrt{\width_{\ell-1}}$ where the entries of $\mW_\ell$ are sampled iid standard normal, and applies gradient descent with a layer-independent step-size. Since dividing a weight matrix by $\sqrt{\width_{\ell-1}}$ also divides the gradient of that layer by the same factor, NTP is equivalent to training in our setup with a layer-wise standard deviation and learning rate:
\begin{equation}\label{eq:ntp}
    \sigma_\ell = \Theta(1 / \sqrt{\width_{\ell-1}})
    \qquad
    \text{and}
    \qquad
    \eta_\ell = \Theta(1 / \width_{\ell-1}).
\end{equation}
Comparing \cref{eq:sp,eq:ntp}, we see that NTP shares the same initialization scaling as SP but uses a smaller step-size. To see the deficiency of NTP, notice the output layer $\sigma_L$ is $\sqrt{\width_{L-1}}$ larger than \cref{pzn:spectral_pzn}, so that by the linearity of backpropagation, the gradient to any middle layer $\mW_\ell$ is also $\sqrt{\width_{L-1}}$ larger.
Then the $1 / \width_{\ell-1}$ learning rate in NTP induces a change $\Delta \mW_\ell$ that is $\sqrt{\width_{L-1}}/\width_{\ell -1}$ (smaller) compared to \cref{pzn:spectral_pzn}, which prescribes a learning rate of $\eta_\ell = 1$.
Because \cref{pzn:spectral_pzn} guarantees $\norm{\Delta \mW_\ell}_* = \Theta(1)$, NTP causes $\Delta \mW_\ell$ to vanish in spectral norm as hidden widths $\width_{\ell-1}, \width_{L} \to \infty$.

It bears noting that an MLP parameterized with the NTP can be made to undergo feature evolution by simply rescaling the network output (and appropriately scaling down the global learning rate) \citep{chizat:2019-lazy-training, bordelon:2022-feature-learning-dmft}.
This operation transforms the NTP into \muP{}.

\subsection{Contrast to ``Frobenius-normalized updates''}

\cref{cond:scaling} mandates that the spectral norm of the update at layer $\ell$ be proportional to the spectral norm of the weight matrix to which it is applied: $\norm{\DW_\ell}_* \propto \norm{\mW_\ell}_*$. This contrasts with a body of optimization work \citep{lars,pmlr-v80-shazeer18a, my-fromage, my-madam, my-nero} that has suggested instead that the \textit{Frobenius norm} of the update at layer $\ell$ be proportional to the \textit{Frobenius norm} of the weight matrix to which it is applied: $\norm{\DW_\ell}_F \propto \norm{\mW_\ell}_F$. For instance, \citet{my-fromage} analysed the operator structure of deep neural networks and wrote down perturbation bounds on network activations in terms of perturbations to the weight matrices at each layer, and used these perturbation bounds to motivate making updates that are small in Frobenius norm.
The flaw in that analysis is the assumption that weight matrices and their perturbations have identical conditioning structure \citep[Condition 2 of Theorem 1]{my-fromage}, when in reality weight matrices have high stable rank and updates have $\Theta(1)$ stable rank.
In light of this fact, the Frobenius-normalized proportionality rule should be modified to $\smash{\norm{\DW_\ell}_F \propto \norm{\mW_\ell}_F / \sqrt{\min(\width_\ell, \width_{\ell-1})}}$ in order to see proper feature evolution.

\section[Demonstration: muP vs. the NTP]{Demonstration: \boldmuP{} versus NTP}
\label{sec:mup_vs_ntp}

Here we discuss a simple, illustrative experiment in which we directly verify that \muP{} obeys our spectral scaling condition (\cref{cond:scaling}) and achieves leading-order feature evolution (\cref{des:scaling}) and the NTP does not.
We do so via direct measurement of spectral quantities in MLPs of varying width trained on the same task. \cref{app:exp} provides full experimental details, and results are plotted \cref{fig:mup_vs_ntp_exp}.

\textbf{Model and data.}
We train MLPs with $L = 3$ linear layers, no biases, and ReLU activations.
For demonstration purposes, we train on a small subset of $B = 200$ examples from a two-class subset of CIFAR-10.
The model has input dimension $\width_0 = 3072$, output dimension $\width_3 = 1$, and uniform hidden dimension $\width_1 = \width_2 = \width$, with $\width$ varied between training runs.
We initialize and train each MLP twice with hyperparameters obtained using \muP{} and NTP scalings, respectively. We train networks of widths $\width \in [16, 4096]$ to near-zero training loss. After training, we compute various spectral quantities which we now discuss.

\begin{figure}
  \centering
  \includegraphics[width=\textwidth]{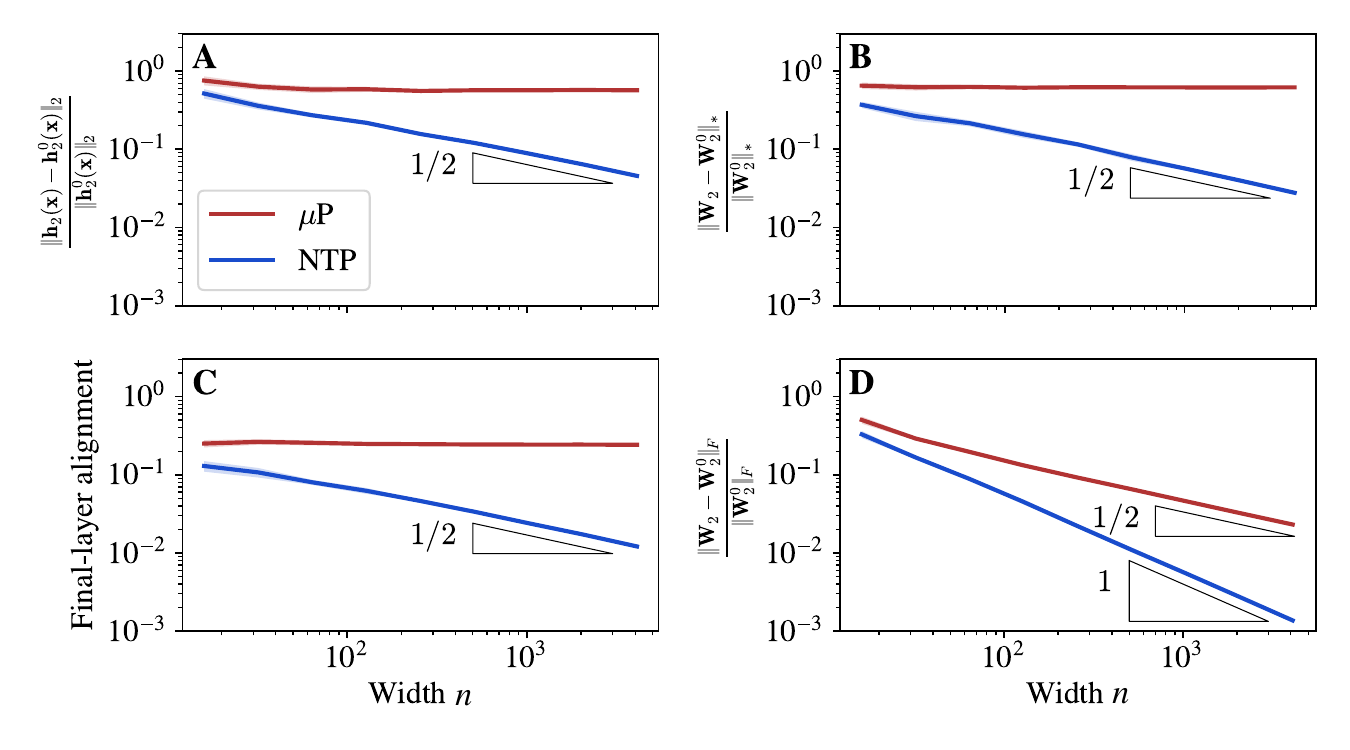}%
    \llap{\raisebox{5.2mm}{
    \includegraphics[width=2.5mm, clip=true, trim = 49mm 8mm 24mm 37mm]{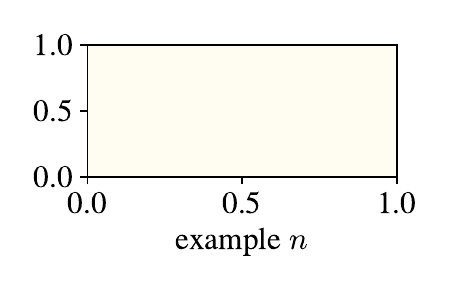}%
    }\hspace*{26.5mm}}%
    \llap{\raisebox{5.2mm}{
    \includegraphics[width=2.5mm, clip=true, trim = 49mm 8mm 24mm 37mm]{img/n_example.pdf}%
    }\hspace*{107mm}}%
  \caption{
  \textbf{Spectral quantities are $\bm{\Theta(1)}$ under \muP{} but decay with width under NTP.} We train multilayer perceptrons of varying width, and plot the following quantities computed between the initial and final network: 
  \textbf{(A)} Average relative change in features.
  \textbf{(B)} Relative change in weights in spectral norm.
  \textbf{(C)} Final layer alignment with incoming vectors (\cref{eqn:final_layer_align}).
  \textbf{(D)} Relative change in weights in Frobenius norm.
  Labeled triangles show predicted powerlaw slopes.
  Shaded regions show one standard deviation over random data and initialization.
  }
  \label{fig:mup_vs_ntp_exp}
\end{figure}

\textbf{Norms of feature updates.}
We measure the average relative change in features over training:
\begin{equation}
    \E{\vx}{
    \frac{\norm{\vh_2(\vx) - \vh_2^0(\vx)}_2}
    {\norm{\vh_2^0(\vx)}_2}
    },
\end{equation}
where $\vh_2^0(\vx)$ and $\vh_2(\vx)$ are the second (preactivation) hidden vector at initialization and after training respectively, and the expectation is over samples $\vx$ from the batch.
As shown in \cref{fig:mup_vs_ntp_exp}A, this feature evolution ratio remains roughly fixed as width $d$ grows when using \muP{}, in satisfaction of \cref{des:scaling}.
By contrast, it decays as $1 / \sqrt{\width}$ for the NTP as predicted by \citet{lee:2019-ntk}.

\textbf{Spectral norms of weight updates.}
We measure the relative change in weights in spectral norm:
$\norm{\mW_2 - \mW_2^0}_* / \norm{\mW_2^0}_*$,
where $\mW_2^0$ and $\mW_2$ are the second weight matrix before and after training, respectively.
As shown in \cref{fig:mup_vs_ntp_exp}B, this ratio remains roughly fixed with width in the case of \muP{}, in accordance with \cref{cond:scaling}.
By contrast, it decays as $1 / \sqrt{\width}$ for the NTP.

\textbf{Final-layer alignment.}
We measure the alignment of the final layer to incoming vectors as follows:
\begin{equation} \label{eqn:final_layer_align}
    \textrm{Final-layer alignment}
    \defeq
    \E{x}{
    \frac
    { \norm{\mW_3 \vh'_2(\vx)}_2 }
    { \norm{\mW_3}_* \cdot \norm{\vh'_2(\vx)}_2 }
    }.
\end{equation}
This quantity is $\Theta(1 / \sqrt{\width})$ at initialization.
As shown in \cref{fig:mup_vs_ntp_exp}C, it grows to $\Theta(1)$ when using \muP{} but remains $\Theta(1 / \sqrt{\width})$ when using the NTP.

\textbf{Frobenius norms of weight updates.}
One often hears the claim that ``the weights don't move'' when training very wide neural networks.
Here we show that the validity of this claim crucially depends on the choice of metric.
In \cref{fig:mup_vs_ntp_exp}D, we show that the Frobenius norm is deceptive: the net relative change $\norm{\mW_2 - \mW_2^0}_F / \norm{\mW_2^0}_F$ can decay with width even when the relative change in spectral norm is constant. This provides crucial context for interpreting existing results in the literature \citep[Figure 1]{lee:2019-ntk}.

\section{Related work}
\label{sec:related_work}

$\mu$P was derived heuristically from spectral norm considerations in talks given by the first author in 2021 \citep{yang2021workshoptalk}. Earlier work \citep{my-fromage} derived a spectral analysis of feature learning based on perturbation bounds, but that work obtained the wrong scaling relation with network width due to a flawed conditioning assumption on gradients. Below, we review various strands of related work on feature learning and training strategies.

\textbf{Parametrizations for wide neural networks.}
Much work has examined the scaling behavior of training dynamics of networks at large width.
Together, works on the ``neural tangent kernel'' (NTK) limit \citep{jacot:2018, lee:2019-ntk}, the ``mean-field'' limit \citep{rotskoff:2018-mean-field-limit, mei:2019-mean-field-limit, sirignano:2022-mean-field-analysis}, and the related ``feature learning'' limit \citep{geiger:2020-lazy-training, yaida:2022-hyperparameter-scaling-families, yang:2021-tensor-programs-IV} (i.e.\ the \muP{} limit), paint a rich picture of a family of possible infinite-width scalings.
After healthy debate regarding the relative empirical performance of the (more analytically tractable) NTK limit and the feature learning limit, a recent consensus holds that learning features is usually beneficial in practical large-scale deep learning settings \citep{chizat:2019-lazy-training, fort:2020-deep-learning-versus-kernel-learning, vyas:2022-limitations-of-the-ntk-for-generalization}.
Our spectral scaling analysis recovers the feature learning limit in a simpler manner than previous analyses.

\textbf{Spectral normalization.} Spectral normalization emerged as a form of weight normalization in the generative adversarial network literature \citep{Miyato2018SpectralNF}. This form of normalization acts on the weight matrices, and is used analogously to other normalization schemes such as batchnorm \citep{pmlr-v37-ioffe15} and layernorm \citep{Ba2016LayerN}.
In contrast to our spectral scaling condition (\cref{cond:scaling}), this method treats only the weights $\mW_\ell$, not the updates $\DW_\ell$. Furthermore, spectral normalization implementations typically set the spectral norm of weight matrices either to one or to a tunable hyperparameter \citep{farnia2018generalizable}, and do not include the key factor of $\sqrt{\width_\ell / \width_{\ell-1}}$ in our \cref{cond:scaling}.


\textbf{Operator theory of neural networks.} Neural networks are constructed by composing linear operators with elementwise nonlinearities. One line of work studies this operator structure and how it behaves under perturbation to understand how step sizes should be set in gradient descent. For instance, \citet{my-fromage} derive perturbation bounds on the maximum amount of feature change that can be induced by a gradient step in terms of the operator properties of the weight matrices. Meanwhile, \citet{yang:2021-tensor-programs-IV} study the operator structure of neural networks in the limit that width is taken to infinity, proposing a parametrization that obtains feature learning in this limit.

\paragraph{Optimization theory.}
A body of literature studies optimization algorithms for deep networks that take steps whose size is set relative to the weights to which they are applied \citep{lars, my-fromage, my-madam, my-nero, carbonnelle2019layer, pmlr-v80-shazeer18a}.
A particular focus has been placed on setting the Frobenius norm of update steps to be small relative to the Frobenius norm of the weight matrices \citep{lars,my-fromage,my-madam,my-nero}.
A main practical takeaway of this paper is that the Frobenius norm should be replaced by the spectral norm to get proper width scaling. The source of the difference between Frobenius and spectral norm is that gradient updates tend to have low stable rank, as shown in \cref{fig:batch_scaling}, while the weights themselves tend to have high stable rank.




\section{Conclusion}
\label{sec:discussion}

We have presented an analysis of the dynamics of feature learning in deep neural networks, beginning with desired conditions on feature evolution (\cref{des:scaling}) and culminating in the demonstration that these conditions may be achieved by simple scaling rules (\cref{cond:scaling,pzn:spectral_pzn}) applied uniformly to each layer.
Our analysis recovers and generalizes practically-important ``feature-learning parametrizations'' and provides a simple, unifying perspective on the question of parametrization in wide neural networks. For comparison, formal results derived under the \textit{tensor programs} framework are given in \cref{sec:formal_theory}.

Our discussion has focused principally on MLPs for clarity, but our feature learning desideratum and spectral scaling condition can be directly applied to structured architectures.
The spectral scaling condition may be applied to multi-index tensors as appear in convolutional architectures by applying the condition to appropriate ``slices'' of the full tensor.
Simple application of our spectral scaling condition recovers \muP{} scalings reported for these model classes (see e.g. \citet{yang2021tuning}, Table 8 and Section J.2).
We also give the hyperparameter scalings for biases (which are easily derived but omitted in the main text for clarity)
in Appendix \ref{app:biases}.
This architectural universality is also proven rigorously in \cref{sec:formal_theory}.

Finally, we note that under a natural redefinition of the $\ell^2$-norm of a vector $\vv\in\R^\width$ incorporating a normalization prefactor, \textit{all vector and matrix norms used in our spectral analysis} become $\Theta(1)$, permitting an elegant summary of our conclusions.
We state and discuss this nondimensionalization procedure in \cref{app:nondim}.
This generalization permits the extension of our results to e.g. the input layer in language modeling, in which one-hot embeddings violate our assumption that $\norm{\vx}_2 = \Theta(\sqrt{\width_0})$.

\section*{Acknowledgements}
The authors thank Josh Albrecht, Blake Bordelon, Alex Wei, Nikhil Ghosh, and Dhruva Karkada for useful discussions and comments on the manuscript.
JS gratefully acknowledges support from the National Science Foundation Graduate Fellow Research Program (NSF-GRFP) under grant DGE 1752814.

\section*{Author Contributions}
GY developed our core insight regarding the utility of the spectral norm, produced our tensor programs theory (\cref{sec:formal_theory}), and aided in refinement of the paper.
JS spearheaded the writing of the paper, led iteration towards simple analysis which communicates our spectral picture, and ran experiments.
JB developed an early incarnation of the spectral picture \citep{my-fromage}, contributed key insights simplifying our exposition including unifying all layers under single formulae, aided in writing the paper, and ran experiments.

\bibliography{refs, jamie_refs}
\bibliographystyle{tmlr/tmlr}

\appendix
\newpage

\section{Experimental details}
\label{app:exp}

\textbf{Experimental details for \cref{fig:batch_scaling}.}
We examine randomly-initialized MLPs with depth $L=3$, widths $\width_0 = 3072, \width_1 = \width_2 = d = 300, \width_3 = 10$, and ReLU and tanh activation functions.
We then pass single batches of CIFAR-10 data of varying size $B$ through the model and compute a single gradient step $\DW_\ell$ at layer $\ell = 2$ with arbitrary learning rate.
We then compute the stable rank of $\DW_\ell$ and the alignment metric
\begin{equation}
    \E{\vx}{
    \frac{\norm{\DW_\ell \vh_{\ell-1}(\vx_i)}_2}{\norm{\DW_\ell}_* \norm{\vh'_{\ell-1}(\vx_i)}_2}
    },
\end{equation}
where the expectation is taken over $\vx$ from the batch.
Shaded envelopes in \cref{fig:batch_scaling} denote one standard deviation with respect to both random network initialization and random batch selection over $10$ trials.

We conducted an additional experiment to try to ascertain the source of the low effective rank of $\DW_\ell$.
Let $\mH'_{\ell-1} = [\vh'_1(\vx_1) \ldots \vh'_1(\vx_B)] \in \R^{d \times B}$ be the matrix formed from stacking the $\ell=1$ post-nonlinearity hidden vectors from the full batch, and let $\mG_{\ell} = [\vg_2(\vx_1) \ldots \vg_2(\vx_B)] \in \R^{d \times B}$ with $\vg_\ell(\vx) = \nabla_{\vh_{\ell}(\vx)}\L$ be the matrix formed from stacking the loss gradients at layer $\ell=2$ from the full batch.
It is the case that $\DW_\ell \propto \mG_{\ell} {\mH'_{\ell-1}}\T$, and if $\DW_\ell$ has low stable rank, it is a reasonable guess that either $\mH'_{\ell-1}$ and $\mG_{\ell}$ also has low stable rank.
In fact, we find that \textit{both} $\mH'_{\ell-1}$ and $\mG_{\ell}$ have low stable rank, as shown in \cref{fig:forwar\width_backwar\width_sranks}.

\begin{figure}
  \centering
  \includegraphics[width=\textwidth]{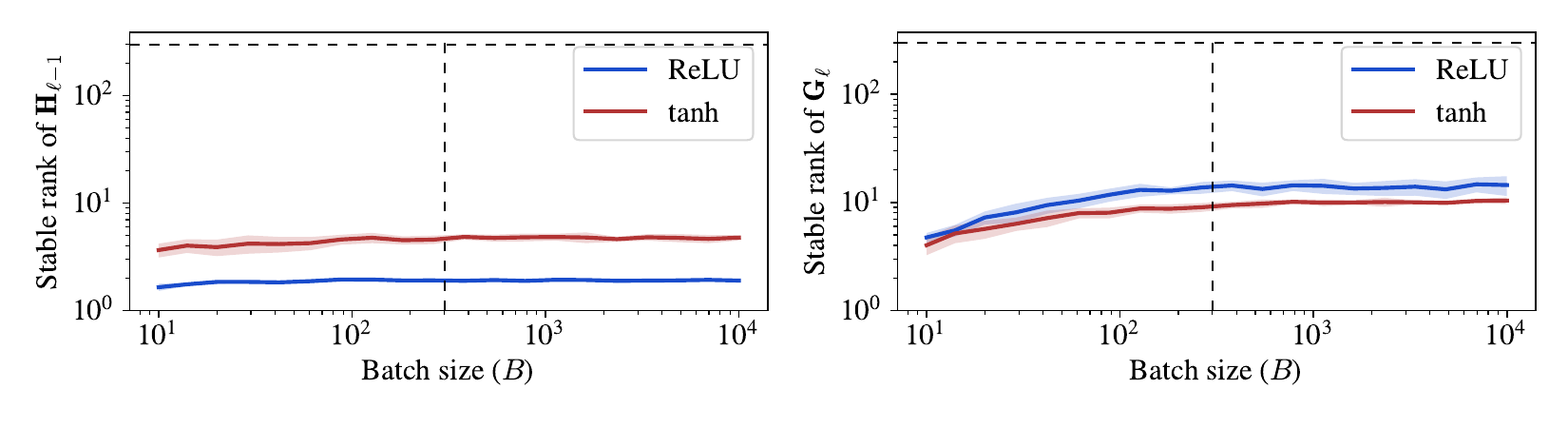}
  \caption{
  Stable ranks for the set of forward hidden vectors $\mH'_{\ell-1}$ and the set of backward hidden vectors $\mG_{\ell}$ for $\ell = 2$.
  Even at large batch size $B$, both are consistently much lower than their maximal possible values of $d = 300$.
  Dashed lines on both axes in both subplots show the network width $d$.
  Note that the stable rank of a matrix $\mM$ is given by $\smash{\norm{\mM}_F^2 / \norm{\mM}_*^2}$.
  }
  \label{fig:forwar\width_backwar\width_sranks}
\end{figure}

\textbf{Experimental details for \cref{fig:mup_vs_ntp_exp}.}
We train MLPs with depth $L = 3$, widths $\width_0 = 3072, \width_1 = \width_2 = d, \width_3 = 1$, and ReLU activation functions.
The data consists of 200 samples from CIFAR-10 \citep{krizhevsky:2009} from only the classes \texttt{airplane} and \texttt{automobile} and uses $\pm 1$ targets.

We use two different hyperparameter schemes as follows.
To implement \muP{}, we take
\begin{equation}
    \sigma_1 = \sqrt{\frac{2}{\width_0}}; \quad
    \sigma_2 = \sqrt{\frac{2}{d}}; \quad
    \sigma_3 = \frac{\sqrt 2}{d}; \quad
    \eta_1 = \eta \cdot \frac{d}{\width_0}; \quad
    \eta_2 = \eta; \quad
    \eta_3 = \eta \cdot \frac{1}{d},
    \tag{\muP{}}
\end{equation}
with global learning rate $\eta = 0.1$.
To implement NTP, we follow \citet{jacot:2018} and \citet{lee:2019-ntk}:
\begin{equation}
    \sigma_\ell = \sqrt{\frac{2}{\width_{\ell - 1}}}; \qquad
    \eta_\ell = \eta \cdot \frac{1}{\width_{\ell - 1}}
    \tag{NTP}
\end{equation}
at all layers, again with $\eta = 0.1$.
These parameterizations are equivalent at $d = 1$, which lets us view each parameterization as a particular scaling prescription applied to a narrow base network. 

We train full-batch for $10^4$ steps, which is sufficient for all widths to drop below $0.01$ training loss on average by the end of training.
We do not expect that training for many more steps would saliently change the resulting plots.
Shaded envelopes in \cref{fig:mup_vs_ntp_exp} denote one standard deviation with respect to both random network initialization and random batch selection over $10$ experiment trials.


It is perhaps worth emphasizing that these experiments worked much better than they had to.
Our theory strictly applies only to the case of a small number of gradient steps relative to network width, but the net updates shown in each subplot of \cref{fig:mup_vs_ntp_exp} reflect the accumulation of thousands of gradient steps, a number which is larger than network width in all cases.
We were thus surprised by the very clear agreement of this experiment with predicted power laws.


\newcommand{\calP}{\mathcal{P}}
\newcommand{\calT}{\mathcal{T}}
\newcommand{\calZ}{\mathcal{Z}}
\newcommand\dout{e}%
\newcommand\trsp{\top}%
\newcommand{\netsort}{\texorpdfstring{{$\textsc{Netsor}\trsp$}}{NetsorT}}
\newcommand{\netsor}{\texorpdfstring{{$\textsc{Netsor}$}}{Netsor}~}
\newcommand{\netsortp}{\texorpdfstring{{$\textsc{Netsor}\top^+$}}{NetsorT+}}
\newcommand{\nexor}{\texorpdfstring{{$\textsc{Ne}{\otimes}\textsc{or}\trsp$}}{NexorT}}
\newcommand{\nexort}{\nexor{}}
\newcommand{\QQ}{\boldsymbol{Q}}
\newcommand{\aout}{a_{\mathrm{out}}}
\newcommand\Gaus{\mathcal{N}}%

\newcommand\EV{\operatorname*{\mathbb{E}}}%

\newcommand\bra[1]{\langle#1 \ob}%

\newcommand\ket[1]{\ob#1 \rangle}%

\newcommand\braket[2]{\langle#1 \ob#2 \rangle}%

\newcommand\hatbrakethat[2]{\hat{\langle}#1 \ob#2 \hat{\rangle}}%

\newcommand\hatbra[1]{\hat{\langle}#1 \ob}%

\newcommand\hatket[1]{\ob#1 \hat{\rangle\mkern-3mu}\mkern+3mu}%

\newcommand\dotbra[1]{\dot{\langle}#1 \ob}%

\newcommand\dotket[1]{\ob#1 \dot{\rangle\mkern-3mu}\mkern3mu}%

\newcommand{\oplim}[1]{\ob#1 \ob}%

\newcommand{\brakethat}[2]{\langle #1 \hatket{#2}}

\newcommand{\braketdot}[2]{\langle #1 \dotket{#2}}

\newcommand{\hatbraket}[2]{\hatbra{#1} #2 \rangle}

\newcommand{\dotbraket}[2]{\dotbra{#1} #2 \rangle}

\newcommand{\Qketdbra}[3]{\overline{\ketdbra{#1}{#2}{#3}}}

\newcommand\opket[2]{\ob #1 \ob #2 \rangle}%

\newcommand\ob{\hstretch{0.7}{\talloblong}}%

\newcommand{\mathbox}[1]{\mathord{\ThisStyle{%
  \fboxsep1\LMpt\relax\kern1\LMpt\fbox{$\SavedStyle#1$}\kern1\LMpt}}}

\newcommand{\bx}{\mathbox}

\newcommand\yy{\boldsymbol{y}}%

\newcommand\zz{\boldsymbol{z}}%
\newcommand\xx{\boldsymbol{x}}%
\newcommand\cc{\boldsymbol{c}}%

\newcommand{\tDW}{\widetilde \DW}

\section{Tensor programs theory}
\label{sec:formal_theory}

For simplicity, assume all hidden widths $\width_1=\cdots=\width_{L-1}$ are the same.
Borrowing from \citet{yang:2021-tensor-programs-IV}, an \emph{abc-parametrization}  is just a recipe for scaling (as powers of width) the multiplier, initializer scale, and learning rate of all parameter tensors of a neural network.
SP, NTP, and $\mu$P are all examples of abc-parametrizations.
A \emph{stable} abc-parametrization is one whose (pre)activations and output do not blow up with width at any step of training. Then (under the generous \cref{assm:TP}) the following theorem is our main result:
\begin{theorem}\label{thm:recovermuP}
In $\mu$P, for almost every learning rate (in the measure-theoretic sense), \cref{{cond:scaling}} is satisfied at any time during training for sufficiently large width.
$\mu$P is the unique stable abc-parametrization with this property.
\end{theorem}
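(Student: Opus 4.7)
The plan is to use the tensor programs framework to track, for each step $t$ of training, the coordinate distribution of every (pre)activation, backward signal, and weight entry in the infinite-width limit, from which the desired spectral-norm estimates fall out. For the first half of \cref{cond:scaling} at $t=0$, under $\mu$P the initial weight matrix $\mW_\ell$ has i.i.d.\ Gaussian entries of variance $\sigma_\ell^2$ chosen so that standard Rudelson--Vershynin tail bounds give $\norm{\mW_\ell}_* = \Theta(\sigma_\ell(\sqrt{\width_\ell}+\sqrt{\width_{\ell-1}})) = \Theta(\sqrt{\width_\ell/\width_{\ell-1}})$. For the update at $t=0$, tensor programs tells us $\DW_\ell$ is (up to lower-order corrections) an outer product of a backward Z-vector and a forward Z-vector whose coordinate distributions are $\Theta(1)$ under $\mu$P; the spectral norm of this rank-one outer product, scaled by the $\mu$P learning rate $\eta_\ell = \Theta(\width_\ell/\width_{\ell-1})$, equals $\Theta(\sqrt{\width_\ell/\width_{\ell-1}})$, matching the second half.

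To extend to later steps I would proceed by induction on $t$. Subadditivity of the spectral norm gives the upper bound $\norm{\mW_\ell^{(t+1)}}_* = O(\sqrt{\width_\ell/\width_{\ell-1}})$ at once. The matching lower bound is where the ``almost every learning rate'' qualifier enters: by the master theorem of tensor programs, the infinite-width limits of all the Z-coordinates depend analytically on the learning-rate tuple, so the set of learning rates at which a leading-order cancellation occurs between $\mW_\ell^{(t)}$ and $\DW_\ell^{(t)}$ (or between summands in $\DW_\ell^{(t)}$) lies in a proper analytic subvariety of the learning-rate space and hence has Lebesgue measure zero. The same machinery shows that $\DW_\ell^{(t)}$ retains a sum-of-outer-products-of-Z-vectors structure after any number of steps, so its spectral norm is computable from the Z-coordinate distributions and again comes out to $\Theta(\sqrt{\width_\ell/\width_{\ell-1}})$.

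For uniqueness, I would invoke the classification of stable abc-parametrizations from \citet{yang:2021-tensor-programs-IV}, which parameterizes stable choices by a triple $(a_\ell,b_\ell,c_\ell)$ of width-exponents per layer subject to a system of linear stability inequalities. Under any such parametrization, $\norm{\mW_\ell}_*$ and $\norm{\DW_\ell}_*$ are monomials in $\width_\ell,\width_{\ell-1}$ whose exponents are affine in $(a_\ell,b_\ell,c_\ell)$. Imposing the two equalities in \cref{cond:scaling} at every layer then yields a linear system whose unique solution lying in the stability polytope is precisely the $\mu$P vertex: any direction that increases an exponent leaves the polytope (training blows up with width), while any direction that decreases an exponent strictly violates \cref{cond:scaling} from below. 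The main obstacle, and the reason for the ``almost every learning rate'' qualifier, is the inductive lower bound on $\norm{\DW_\ell^{(t)}}_*$ for $t\geq 1$: one has to rule out degenerate learning rates at which the relevant Z-vectors collapse, and this is exactly where the analyticity/genericity argument on the learning-rate tuple is essential; a cleaner alternative would be to absorb an arbitrarily small random perturbation into $\eta$ to preempt any algebraic cancellation with probability one.
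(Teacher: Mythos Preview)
Your proposal is broadly sound but routes through different machinery than the paper. The paper's core lemma is structural: it shows that $\DW_\ell / \norm{\DW_\ell}_F$ converges, in the tensor-programs sense, to a Hilbert--Schmidt integral operator on the ket space $\calZ$, from which $\norm{\DW_\ell}_F / \norm{\DW_\ell}_* = \Theta(1)$ follows immediately (\cref{cor:DWstablerank}). This converts the spectral-norm question into a Frobenius-norm question, and the latter is read off directly from the Master Theorem with no induction on $t$. Your step-by-step outer-product bookkeeping plus analyticity would also work but is more laborious, and the Hilbert--Schmidt route has the bonus of applying verbatim to entrywise adaptive optimizers (Adam, SignSGD), where $\DW_\ell$ is a \emph{nonlinear} outer product and your rank-one computation no longer applies.

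You also slightly misplace where the ``almost every learning rate'' qualifier bites. For \emph{matrix} parameters the paper needs no genericity at all: since the initialization has $\norm{\mW_\ell^{(0)}}_F = \Theta(\sqrt{\width})$ while every update has $\norm{\DW_\ell}_F = \Theta(1)$, the triangle inequality gives $\norm{\mW_\ell^{(t)}}_F = \Theta(\sqrt{\width})$ regardless of the learning rate, and then $\norm{\mW_\ell^{(t)}}_* \ge \norm{\mW_\ell^{(t)}}_F / \sqrt{\width} = \Omega(1)$ for free. The measure-zero exclusion in the paper is invoked only for \emph{vector} parameters (e.g.\ the output row), whose Frobenius norm is itself $\Theta(1)$ and could in principle be exactly annihilated by an update at a degenerate learning rate. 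Your analytic-subvariety argument is a legitimate alternative for that case, but it is overkill for the hidden matrices. Your uniqueness argument via the stability polytope is essentially the paper's one-line observation that any other stable parametrization merely rescales the initialization and the update, just spelled out in more detail.
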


The main insight leading to \cref{thm:recovermuP} is that: $\DW_\ell / \norm{\DW_\ell}_F$ converges to a Hilbert-Schmidt operator (in an appropriate space) as width goes to infinity.
This in turn implies:
%
%
\begin{lemma}\label{cor:DWstablerank}
Unless $\DW_\ell = 0$, $\norm{\DW_\ell}_F / \norm{\DW_\ell}_* = \Theta(1)$ as width goes to infinity.%
\footnote{
It's possible to make precise quantitative statements here using a quantitative version of the Master Theorem but we won't be concerned with it here.}
\end{lemma}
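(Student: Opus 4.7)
The plan is to bootstrap the lemma from the Hilbert-Schmidt convergence stated immediately before its statement. First I would dispose of the trivial direction: for any finite matrix $A$, the universal inequality $\norm{A}_* \le \norm{A}_F$ (the top singular value is dominated by the $\ell^2$ norm of the singular-value vector) gives $\norm{\DW_\ell}_F / \norm{\DW_\ell}_* \ge 1$ automatically, so only the matching upper bound requires work.

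For the upper bound the strategy is a direct continuity argument. By hypothesis, the normalized update $T_\width \defeq \DW_\ell / \norm{\DW_\ell}_F$ converges, as width goes to infinity, to a Hilbert-Schmidt operator $T$ in the limit Hilbert space produced by the Tensor Programs Master Theorem. Since $\norm{T_\width}_{HS} = 1$ for every $\width$, continuity of the HS norm forces $\norm{T}_{HS} = 1$, and in particular $T \neq 0$. The operator norm is dominated by the HS norm ($\norm{\cdot}_* \le \norm{\cdot}_{HS}$) and is therefore $1$-Lipschitz on the HS class, so convergence in HS norm upgrades to convergence in operator norm, yielding
\begin{equation*}
    \frac{\norm{\DW_\ell}_*}{\norm{\DW_\ell}_F} \;=\; \norm{T_\width}_* \;\longrightarrow\; \norm{T}_*.
\end{equation*}
Because $T$ is a nonzero bounded operator, $\norm{T}_* > 0$, so the reciprocal $\norm{\DW_\ell}_F / \norm{\DW_\ell}_*$ tends to the finite positive constant $1 / \norm{T}_*$. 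Together with the trivial lower bound this gives the desired $\Theta(1)$ conclusion.

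The main obstacle, which the lemma punts on, is justifying both the HS convergence and the non-degeneracy of the limit operator inside the Tensor Programs formalism. Concretely, $\DW_\ell$ is (up to a learning rate) a sum of outer products (\cref{eqn:DW,eqn:DWk_sum}), possibly accumulated across several gradient steps, so its limit should be a finite sum $T = \sum_i \ket{g^{(i)}} \bra{h^{(i)}}$ of rank-one operators whose factors are the limit random variables supplied by the Master Theorem; verifying that the rescaled sum converges in HS norm (rather than merely entrywise) is the first technical point. The second is ruling out accidental cancellations that would force $T = 0$ even though $\DW_\ell \neq 0$ at finite width. This reduces to a linear independence / non-degeneracy statement for the limiting kets and bras, and is the step where the full quantitative Master Theorem really earns its keep.
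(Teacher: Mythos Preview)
Your high-level plan is right: the trivial inequality $\norm{\DW_\ell}_F / \norm{\DW_\ell}_* \ge 1$ handles one direction, and the other reduces to showing $\norm{\tDW}_* = \Omega(1)$ for $\tDW \defeq \DW_\ell / \norm{\DW_\ell}_F$; extracting this from the fact that the limit operator $T$ is nonzero Hilbert--Schmidt (so $\norm{T}_* > 0$) is also the correct instinct. The gap is in the step where you conclude $\norm{T_\width}_* \to \norm{T}_*$ by ``$1$-Lipschitz continuity of the operator norm under HS convergence.'' That argument needs $\norm{T_\width - T}_{HS} \to 0$, but $T_\width$ acts on $\R^\width$ while $T$ acts on the infinite-dimensional ket space $\calZ$; there is no common space in which $T_\width - T$ is even defined. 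What the Master Theorem actually supplies is convergence of \emph{scalar} quantities that are expressible inside the program---for instance $\norm{\tDW}_F^2$, or $\norm{\tDW z}_2 / \norm{z}_2$ for a fixed program vector $z$. The spectral norm $\norm{\tDW}_* = \sup_v \norm{\tDW v}/\norm{v}$ is a supremum over \emph{all} vectors, not only program vectors, so it is not itself such a scalar and does not converge for free. (Knowing only $\norm{T_\width}_{HS}\to\norm{T}_{HS}$ certainly does not pin down $\norm{T_\width}_*$.)

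The paper fills exactly this hole with a witness-vector construction rather than an abstract continuity argument. One extends the program by a vector $z$ whose limit ket nearly realizes the operator norm of $T$, say $\norm{T\ket z} \ge \theta\,\norm{T}_*\,\norm{\ket z}$ with $\theta>1/2$. Now $\norm{\tDW z}/\norm{z}$ \emph{is} a program scalar, so the Master Theorem gives its convergence to $\norm{T\ket z}/\norm{\ket z}>0$, and the pointwise bound $\norm{\tDW}_* \ge \norm{\tDW z}/\norm{z}$ then yields $\norm{\tDW}_* = \Omega(1)$ at finite width. Your final paragraph flags ``HS convergence'' and nondegeneracy of the limit as the obstacles, but the genuinely missing ingredient is this passage from $\norm{T}_*>0$ back to a lower bound at finite width; once one sees that the spectral norm cannot be treated as a program scalar directly, the need for a concrete witness vector (and a program extension to house it) becomes clear.
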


These statements are universal: they hold for any architecture and any adaptive optimizers representable by tensor programs (including convolutional neural networks, residual networks, and transformers, etc., as well as RMSProp, Adam, etc.), not just MLP and SGD.

\subsection{Proofs}
\label{app:TPproofs}

As we have explained the core intuitions of our spectral perspective of feature learning in the main text, here we focus on proving the most general result in the most concise way.

Our proofs will rely on the following notions defined in prior work:
\begin{itemize}
    \item representable architecture \citep[Defn 2.9.1]{yang2023adaptive}
    \item matrix/vector/scalar parameters \citep[Defn 2.9.1]{yang2023adaptive}
    \item abcd-parametrization for representable architectures \citep[Defn 2.9.7]{yang2023adaptive}
    \item entrywise optimizer \citep[Sec 2.1]{yang2023adaptive}
    \item ket and iid-copy notation \citep[Sec 1.2]{yang2023adaptive}
\end{itemize}

Everything here follows under the following:
\begin{assumption}\label{assm:TP}
Assume our architecture is representable by a \nexort{} program with pseudo-Lipschitz nonlinearities, trained by an entrywise optimizer with pseudo-Lipschitz update functions.
\end{assumption}

\subsubsection*{Random Initialization}

The following is our main proposition for initialization:

\begin{proposition}\label{prop:randominit}
    In any abcd-parametrization, any matrix parameter $\mW$ at random initialization has $\norm{\mW}_F / \norm{\mW}_* = \Theta(\sqrt \width)$, but any vector or scalar parameter $\mW$ has $\norm{\mW}_F / \norm{\mW}_* = 1$.
\end{proposition}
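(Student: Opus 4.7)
My plan is to split into the three parameter-type cases and reduce the matrix case to a statement about the stable rank of an i.i.d.\ random matrix, which follows from standard random matrix theory. The key observation is that the ratio $\norm{\mW}_F/\norm{\mW}_*$ is invariant under scalar rescaling, so the particular abcd exponents drop out and only the ``shape'' of the initialization ensemble matters.

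\textbf{Trivial cases (scalar and vector parameters).} A scalar parameter $\mW$ is a $1\times 1$ matrix, so $\norm{\mW}_F = \norm{\mW}_* = |\mW|$ and the ratio is exactly $1$. A vector parameter (either $1\times \width$ or $\width\times 1$) is a rank-one matrix, and for any vector $\vv$ the identity $\norm{\vv}_2 = \norm{\vv}_F = \norm{\vv}_*$ recalled in \cref{sec:preliminaries} gives ratio $1$ as well. These cases require no probabilistic input.

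\textbf{Matrix case.} Here $\mW \in \R^{\width\times\width}$ (all hidden widths coincide by assumption) and under any abcd-parametrization the initializer has the form $\mW = \tau\, \mU$, where $\tau>0$ is a width-dependent scalar prefactor and $\mU$ has a shape-only distribution (i.i.d.\ standard Gaussian entries in the usual case, or Haar semi-orthogonal for the orthogonal initializer). Since $\norm{\mW}_F/\norm{\mW}_* = \norm{\mU}_F/\norm{\mU}_*$, the scalar $\tau$ is irrelevant and I need only analyze $\mU$. For i.i.d.\ standard Gaussian $\mU$, $\norm{\mU}_F^2 = \sum_{ij} U_{ij}^2$ concentrates around $\width^2$ by the law of large numbers, so $\norm{\mU}_F = \Theta(\width)$ with high probability, while the sharp Bai--Yin / Marchenko--Pastur bound (cited already in the excerpt via \citet{rudelson, vershynin_2018}) gives $\norm{\mU}_* = \Theta(\sqrt{\width})$. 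Dividing yields $\norm{\mU}_F/\norm{\mU}_* = \Theta(\sqrt{\width})$. For semi-orthogonal $\mU$ the computation is exact: every singular value is $1$, hence $\norm{\mU}_* = 1$ and $\norm{\mU}_F = \sqrt{\width}$, again giving ratio $\sqrt{\width}$.

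\textbf{Main obstacle.} The only nontrivial ingredient is the lower bound $\norm{\mU}_F/\norm{\mU}_* = \Omega(\sqrt{\width})$ in the Gaussian case, which is equivalent to the upper bound $\norm{\mU}_* = O(\sqrt{\width})$; but this is exactly the classical bound on the operator norm of a square Gaussian matrix, so I would simply cite it. Matching upper bounds on $\norm{\mU}_F$ (concentration around its mean) and the corresponding $\Omega(\sqrt{\width})$ lower bound on $\norm{\mU}_*$ (the largest singular value is at least the average) are textbook concentration statements. Together with the cancellation of $\tau$, this delivers the claimed $\Theta(\sqrt{\width})$ ratio for matrix parameters and completes the proposition.
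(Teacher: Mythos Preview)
Your proposal is correct and follows essentially the same approach as the paper's proof: the paper gives only a brief sketch noting that $\norm{\mW}_F \approx \sigma\,\width$ (via concentration) and $\norm{\mW}_* \approx 2\sigma\sqrt{\width}$ (via random matrix theory) for the matrix case, with the vector/scalar cases being trivially ratio $1$. Your treatment is more detailed---in particular the scale-invariance observation that cancels the abcd prefactor and the explicit handling of the semi-orthogonal initializer are nice additions---but the underlying argument is the same.
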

\begin{proof}
    This is a claim about random matrices and follows from classical random matrix theory.
    Here's a quick sketch:
    If $\sigma$ is the standard deviation of a matrix entry, then $\norm{\mW}_F \approx \sigma \cdot \width$ from the central limit theorem, and $\norm{\mW}_* \approx 2 \sigma \cdot \sqrt{\width}$ as stated in the main text, from which the first part of the proposition follows.
    For vectors and scalars, the stated ratio is always $1$.
\end{proof}


\subsubsection*{Matrix Updates}

In a tensor program, each vector $x$ of the program converges to a random variable $\ket x$ (called a \emph{ket}) as width goes to infinity in the sense that the scaled inner product $\langle x, y \rangle / \width$ converges almost surely to $\braket x y = \EV \ket x \ket y$.
The kets form a Hilbert space $\calZ$.
Then any weight parameter converges to a linear operator from $\calZ$ to $\calZ$; any vector parameter converges to a linear operator from $\calZ$ to $\R$ or $\R$ to $\calZ$.
(Any scalar parameter converges to a random real, but that is not too important here).

\begin{proposition}\label{prop:HilbertSchmidt}
    Consider any abcd-parametrization.
    For any matrix or vector parameter $\mW$,
    at any step of training, $\DW / \norm{\DW}_F$ converges to a Hilbert-Schmidt integral operator.
\end{proposition}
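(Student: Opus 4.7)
The plan is to invoke the Master Theorem of tensor programs to pin down the asymptotic structure of $\DW$ at any training step, and then read off the limit of $\DW/\norm{\DW}_F$ as an integral operator on the limiting Hilbert space $\calZ$ of kets. I would handle the matrix case first and remark on the vector case at the end.

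For a matrix parameter $\mW$ trained by SGD with batch $\{(\vx_i, \vy_i)\}_{i=1}^{B}$, the update at any step is $\DW = -\eta \sum_{i=1}^{B} \vg^{(i)} (\vh^{(i)})^{\top}$, where $\vg^{(i)}$ and $\vh^{(i)}$ are in-program vectors (the backward signal and incoming feature at the appropriate layer for example $i$). By the Master Theorem each converges as a ket: $\vg^{(i)} \to \ket{g^{(i)}}$, $\vh^{(i)} \to \ket{h^{(i)}}$ in $\calZ$, and scaled inner products converge to $\braket{\cdot}{\cdot}$. Therefore
\begin{equation*}
\width^{-2}\norm{\DW}_F^{2} \;\to\; \eta^{2} \sum_{i,j=1}^{B} \braket{g^{(i)}}{g^{(j)}} \braket{h^{(i)}}{h^{(j)}},
\end{equation*}
which is precisely the squared Hilbert--Schmidt norm of the finite-rank operator $K = -\eta \sum_{i} \ket{g^{(i)}} \bra{h^{(i)}}$ on $\calZ$. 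So $\DW/\norm{\DW}_F$ converges to $K/\|K\|_{\mathrm{HS}}$, a finite-rank and a fortiori Hilbert--Schmidt integral operator.

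For a general entrywise adaptive optimizer with pseudo-Lipschitz update rule, the tensor programs machinery writes $\DW = Q(\calU)$ where $\calU$ is a \nexort{}-generated matrix built from accumulated gradients and optimizer state, and $Q$ acts entrywise. The crucial point, flagged already in the main text, is that entries of $\calU$ behave in the limit as $U(\omega_\alpha, \omega_\beta)$ for a measurable $U$ on the ambient product space $\Omega^{2}$, so applying pseudo-Lipschitz $Q$ entrywise produces a kernel $Q\circ U$. Two facts then combine to close the argument: (i) a tensor-programs Master-Theorem-style law of large numbers gives $\width^{-2}\norm{\DW}_F^{2} \to \iint Q(U(x,y))^{2}\,d\mu(x)d\mu(y)$, finite by the pseudo-Lipschitz growth bound on $Q$ and the moment bound inherited by $U$ from the recursion defining $\calU$; and (ii) the same Master Theorem identifies the action of $\DW$ on any in-program vector $\vv \to \ket v$ with the integral operator $\ket v \mapsto \int Q(U(\cdot,y))\,\ket v(y)\,d\mu(y)$. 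Normalizing by $\norm{\DW}_F$ yields a Hilbert--Schmidt operator of unit HS norm.

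For vector parameters, the update is itself (up to an overall scalar or fan-in processing factor) an outer product involving a single in-program vector, so the limit is rank one and trivially Hilbert--Schmidt. The main obstacle I foresee is the commutation of the entrywise optimizer $Q$ with the $\width\to\infty$ limit in step (ii): pseudo-Lipschitz (as opposed to Lipschitz) growth forces one to track higher moments of $\calU$ through its \nexort{} recursion, so one must verify inductively that the relevant $L^{p}$ norms of the generating kernels remain finite. Once that bookkeeping is in place, identifying the limit as a Hilbert--Schmidt integral operator is a direct $L^{2}$ computation, and \cref{cor:DWstablerank} falls out immediately since HS and operator norms of a nonzero bounded operator are comparable up to constants.
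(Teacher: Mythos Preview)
Your proof of the proposition itself is correct and follows essentially the same route as the paper: represent the normalized update as a (nonlinear) outer product whose entries are functions of row-indexed and column-indexed program vectors, invoke the Master Theorem to obtain an $L^{2}$ kernel on the ket space $\calZ$, and read off the Hilbert--Schmidt property. The paper does this in one stroke by writing $\tDW = Q(\xx;\yy;\cc)$ uniformly for SGD and entrywise optimizers (using the iid-copy notation to handle the row/column independence), whereas you split into an explicit SGD case (finite-rank limit) and then the adaptive case; both land on the same operator, and your identification of the higher-moment bookkeeping as the only real work matches what the Master Theorem is doing under the hood.

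Your closing remark, however, contains a genuine error: \cref{cor:DWstablerank} does \emph{not} ``fall out immediately since HS and operator norms of a nonzero bounded operator are comparable up to constants.'' That comparability is false in general---only $\norm{\cdot}_{*}\le\norm{\cdot}_{\mathrm{HS}}$ holds, and the ratio can be unbounded (take a diagonal operator with entries $1/k$). The paper therefore proves the spectral-norm lower bound separately in \cref{prop:paramnorms}: it constructs an explicit nonzero vector $z$ in an extension of the program with $\norm{\oplim{\tDW} z\rangle}\ge\tfrac12\norm{\oplim{\tDW}}_{*}\norm{\ket z}$, and then uses the Master Theorem to transfer this witness back to finite width. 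So your argument establishes the proposition, but the corollary requires this additional step.
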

\begin{proof}
    This is trivial for vector parameters.
    For matrix parameters, observe $\tDW = \DW/ \norm{\DW}_F$ is always a nonlinear outer product
    \[\tDW = Q(\xx; \yy; \cc)\]
    for multi-vectors $\xx$ and $\yy$ and multi-scalars $\cc$ and some nonlinearity $Q$.
    Then the limit $\oplim{\widetilde\DW}$ acts on a ket $\ket z$ by
    \[\oplim{\widetilde\DW} z \rangle = \EV_{\bx 1} Q(\ket \xx; \ket \yy^{\bx 1}; \mathring \cc) \ket z^{\bx 1}\]
    integrating over $\ket \yy$ and $\ket z$.
    The Hilbert-Schmidt norm of $\oplim{\widetilde\DW}$ is
    \[\norm{\oplim{\widetilde\DW}}_{HS} = \sqrt{\EV_{\bx 0, \bx 1} Q(\ket \xx^{\bx 0}; \ket \yy^{\bx 1}; \mathring \cc)^2}\]
    which is finite by the Master Theorem.
    Therefore $\oplim{\widetilde\DW}$ is Hilbert-Schmidt.
\end{proof}


\begin{proposition}\label{prop:paramnorms}
    In \cref{prop:HilbertSchmidt}, 
    for large enough width, unless $\DW = 0$, both spectral norm  and Frobenius norm of $\DW/\norm{\DW}_F$ are $\Theta(1)$.
\end{proposition}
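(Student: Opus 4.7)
My plan is to leverage \cref{prop:HilbertSchmidt} and pass to the infinite-width limit. The Frobenius statement $\|\tDW\|_F = \Theta(1)$ is immediate, since $\|\tDW\|_F = 1$ by construction, and the matching spectral upper bound $\|\tDW\|_* \le \|\tDW\|_F = 1$ is just the standard inequality between the two norms. So the only real work lies in the spectral \emph{lower} bound.

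Let $T = \oplim{\tDW}$ denote the limiting Hilbert--Schmidt operator. The closed form derived inside the proof of \cref{prop:HilbertSchmidt} gives $\|T\|_{HS}^2 = \EV_{\bx 0, \bx 1} Q(\ket{\xx}^{\bx 0}; \ket{\yy}^{\bx 1}; \mathring{\cc})^2$, and this is strictly positive whenever $\DW \neq 0$. Hence $T$ is a nonzero Hilbert--Schmidt operator, and in particular its operator norm $\|T\|_{op}$ is a strictly positive constant. The remaining task is to transfer this constant to a lower bound on $\|\tDW\|_*$ at finite width.

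I would accomplish this by exhibiting a specific program vector along which $T$ acts non-trivially. The outer-product structure $\tDW = Q(\xx; \yy; \cc)$ immediately suggests using the vectors in $\yy$: for each $\vy^{(i)} \in \yy$, the \nexort{} Master Theorem gives
\begin{equation*}
    \frac{\|\tDW \vy^{(i)}\|_2^2}{\width} \;\xrightarrow[\width\to\infty]{}\; \EV \left( \EV_{\bx 1} Q(\ket{\xx}; \ket{\yy}^{\bx 1}; \mathring{\cc}) \, \ket{y^{(i)}}^{\bx 1} \right)^2 \;=:\; \kappa_i,
\end{equation*}
so that $\|\tDW\|_* \ge \|\tDW \vy^{(i)}\|_2 / \|\vy^{(i)}\|_2$ converges to $\sqrt{\kappa_i/\braket{y^{(i)}}{y^{(i)}}}$. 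If at least one $\kappa_i$ is strictly positive then $\|\tDW\|_* = \Omega(1)$ and the proof is complete.

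The main obstacle is this last existence claim: that $T$ does not annihilate every $\ket{y^{(i)}}$. The intuition is that $T$ is manufactured entirely out of $\yy$-kets via $Q$, so it cannot vanish on all of them without vanishing identically. I would make this rigorous via a Mercer-type decomposition $Q(\xx; \yy; \cc) = \sum_k \phi_k(\xx; \cc)\, \psi_k(\yy; \cc)$, justified by pseudo-Lipschitzness of $Q$; if $T$ annihilated every $\ket{y^{(i)}}$, then every $\psi_k$-coefficient in the limit would vanish, forcing $T \equiv 0$ and contradicting $\|T\|_{HS} > 0$. In the SGD warmup this whole step collapses to a triviality, since $\tDW \propto \vg \vh^\top$ is rank one with top right-singular direction $\vh$ itself appearing in $\yy$.
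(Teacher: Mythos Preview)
Your handling of the Frobenius claim and the spectral upper bound is fine. The gap is in the spectral lower bound, specifically the closing paragraph.

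You test the limiting operator $T$ only on the raw program vectors $\ket{y^{(i)}}$ and then argue via a Mercer decomposition that if $T$ annihilates every $\ket{y^{(i)}}$ it must vanish identically. That inference fails once $Q$ is genuinely nonlinear in $\yy$, which is exactly the situation for entrywise optimizers such as SignSGD or Adam covered by \cref{assm:TP}. In your expansion $Q = \sum_k \phi_k(\xx;\cc)\,\psi_k(\yy;\cc)$, the condition $T\ket{y^{(i)}} = 0$ for all $i$ only yields $\EV\bigl[\psi_k(\ket\yy)\,\ket{y^{(i)}}\bigr] = 0$ for all $i,k$; that is, each $\psi_k(\ket\yy)$ is orthogonal to the \emph{linear} span of the $\ket{y^{(i)}}$, not zero. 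A single standard Gaussian $\ket y$ with $\psi(y) = y^2 - 1$ already illustrates the failure: $\psi(\ket y)$ is orthogonal to $\ket y$ yet has positive norm, so the associated $T$ is nonzero but annihilates $\ket y$. Your SGD remark is precisely the regime where the problem evaporates ($Q$ bilinear, so the $\psi_k$ are linear in $\yy$), but the proposition must cover general pseudo-Lipschitz $Q$.

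The paper closes the gap differently. Rather than restricting to vectors already present in the program, it invokes the definition of operator norm on $\calZ$ to pick $\ket z$ with $\norm{T\ket z} \ge \theta\,\norm{T}_*\,\norm{\ket z}$ for some $\theta \in (1/2,1]$, and then realizes this $\ket z$ as a vector $z$ in an \emph{extension} of the original tensor program. Applying the Master Theorem to the extended program gives $\norm{\tDW z}/\norm{z}$ bounded below by a positive constant for large width, whence $\norm{\tDW}_* = \Omega(1)$. The program extension is the step you are missing; it is what allows the witness vector to be a nonlinear function of $\yy$ rather than one of the $y^{(i)}$ themselves.
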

This proposition implies \cref{cor:DWstablerank}.
\begin{proof}
    This is obvious for Frobenius norm since it converges to the Hilbert-Schmidt norm of the operator limit.

    However, the spectral norm cannot be expressed directly in such form.
    But, by definition of spectral norm and of the the ket space $\calZ$, one can construct a nonzero vector $z$ in an extension of the program that defined $\DW$, such that
    \[\norm{\oplim{\tDW} z \rangle} = \theta  \norm{\oplim{\tDW}}_* \norm{\ket z},\]
    for some $\theta \in (1/2, 1]$.%
    \footnote{In fact, we can pick $\theta \in [1-\epsilon, 1]$ for any $\epsilon > 0$.}
    This implies, by the Master Theorem, that
    \[
        \frac{\theta}{2}\frac{\norm{\tDW z}}{\norm{z}} \le \norm{\tDW}_* \]
    for sufficiently large width.
    The LHS is $\Theta(1)$, so we are done.

\end{proof}

\begin{proposition}\label{prop:muPmatrix_param}
    In $\mu$P,
    at any fixed time during training, as width $\to \infty$, any matrix parameter $\mW$ has spectral norm $\Theta(1)$ and Frobenius norm $\Theta(\sqrt \width)$.
\end{proposition}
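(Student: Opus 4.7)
The plan is to decompose $\mW = \mW^{(0)} + \sum_{s=0}^{t-1} \DW^{(s)}$, where $\mW^{(0)}$ is the initialization, $\DW^{(s)}$ is the gradient update applied at step $s$, and $t$ is the fixed, width-independent number of training steps. Since the decomposition has $\Theta(1)$ many summands, all triangle inequalities below lose only a constant factor.

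First I would pin down the Frobenius norm. In $\mu$P, a square matrix parameter is initialized i.i.d.\ with standard deviation $\Theta(1/\sqrt{\width})$, so $\norm{\mW^{(0)}}_F = \Theta(\sqrt{\width})$ and, by \cref{prop:randominit}, $\norm{\mW^{(0)}}_* = \Theta(1)$. For each update, the standard $\mu$P calculation (rank-one gradient outer product with $\norm{\vh'_{\ell-1}}_2 = \Theta(\sqrt{\width})$, $\norm{\vg_\ell}_2 = \Theta(1/\sqrt{\width})$, and $\eta_\ell = \Theta(1)$) gives $\norm{\DW^{(s)}}_* = \Theta(1)$, and then \cref{cor:DWstablerank} gives $\norm{\DW^{(s)}}_F = \Theta(1)$. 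Since the initialization dominates the sum of updates in Frobenius norm, forward and reverse triangle inequalities yield
\begin{equation*}
    \norm{\mW^{(0)}}_F - t\cdot O(1) \;\le\; \norm{\mW}_F \;\le\; \norm{\mW^{(0)}}_F + t\cdot O(1),
\end{equation*}
giving $\norm{\mW}_F = \Theta(\sqrt{\width})$.

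Next, the spectral upper bound is immediate from the triangle inequality: $\norm{\mW}_* \le \norm{\mW^{(0)}}_* + \sum_{s<t}\norm{\DW^{(s)}}_* = O(1)$.

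The spectral lower bound is the subtle step, since $\norm{\mW^{(0)}}_*$ and each $\norm{\DW^{(s)}}_*$ are all $\Theta(1)$, so a naive reverse triangle inequality does not rule out cancellation. The key trick is to bootstrap from the Frobenius lower bound via the universal inequality $\norm{\mA}_F \le \sqrt{\mathrm{rank}(\mA)}\cdot\norm{\mA}_*$. Since $\mW \in \R^{\width\times\width}$ has rank at most $\width$,
\begin{equation*}
    \norm{\mW}_* \;\ge\; \norm{\mW}_F / \sqrt{\width} \;=\; \Theta(\sqrt{\width})/\sqrt{\width} \;=\; \Omega(1),
\end{equation*}
completing $\norm{\mW}_* = \Theta(1)$. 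The main obstacle---the spectral lower bound---is thus finessed by the rank inequality, sidestepping a delicate non-cancellation analysis at the level of tensor-program limit operators.
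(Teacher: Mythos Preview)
Your proof is correct and follows essentially the same route as the paper: decompose $\mW$ into initialization plus $\Theta(1)$ many updates, use the $\Theta(\sqrt{\width})$ Frobenius norm of the initialization to dominate the $\Theta(1)$ Frobenius norms of the updates, get the spectral upper bound by triangle inequality, and---crucially---extract the spectral lower bound from the Frobenius lower bound via $\norm{\mW}_* \ge \norm{\mW}_F/\sqrt{\width}$ (which the paper phrases as ``the quadratic mean of the singular values is $\Theta(1)$, so the max singular value is $\Omega(1)$''). The only cosmetic difference is that you first argue $\norm{\DW}_* = \Theta(1)$ via the rank-one outer-product calculation and then invoke \cref{cor:DWstablerank} to get $\norm{\DW}_F = \Theta(1)$, whereas the paper takes $\norm{\DW}_F = \Theta(1)$ as the trivial starting point in $\mu$P and deduces the spectral norm from \cref{prop:paramnorms}; either direction works.
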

\begin{proof}
    In $\mu$P, it's trivial to see that $\norm{\DW}_F=\Theta(1)$.
    So any nonzero $\DW$ (at any point of training) has both spectral norm  and Frobenius norm $\Theta(1)$ by the above.
    
    Simple calculation then shows that $\mW$ at any fixed time has $\Theta(\sqrt \width)$ since this is the case at initialization by \cref{prop:randominit}.
    This means the quadratic mean of the singular values of $\mW$ is $\Theta(1)$, so its max singular value must be $\Omega(1)$.
    But it furthermore must be $\Theta(1)$ because $\mW$ at initialization and all of its updates have $O(1)$ spectral norm.
\end{proof}

\begin{theorem}\label{thm:main_abcd}
In $\mu$P, for all but a measure-zero set of learning rates, \cref{{cond:scaling}} is satisfied at any time during training for sufficiently large width.
$\mu$P is the unique stable and faithful abcd-parametrization with this property.
\end{theorem}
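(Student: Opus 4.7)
The plan is to split \cref{thm:main_abcd} into its forward claim (that $\mu$P satisfies \cref{cond:scaling} at every training step for almost every learning rate) and its uniqueness claim (that no other stable and faithful abcd-parametrization can do so). For the forward direction, \cref{prop:muPmatrix_param} already disposes of the weight-matrix half: since hidden widths are equal, $\sqrt{\width_\ell/\width_{\ell-1}} = \Theta(1)$ and the proposition yields $\norm{\mW_\ell}_* = \Theta(1)$ at every step. For the update half, a direct tensor-program computation via the Master Theorem gives $\norm{\DW_\ell}_F = \Theta(1)$ under $\mu$P (this is essentially what defines $\mu$P's choice of $\eta_\ell$), and then \cref{prop:paramnorms} and \cref{cor:DWstablerank} upgrade this to $\norm{\DW_\ell}_* = \Theta(1)$ whenever $\DW_\ell$ is nonzero in the limit. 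The exceptional learning rates are precisely those on which some $\DW_\ell$ vanishes: because the limiting Hilbert--Schmidt norm computed by the Master Theorem depends analytically on the scalar learning rate $\eta$ and is nontrivial by faithfulness, these bad $\eta$ form the zero set of a nontrivial analytic function and hence have Lebesgue measure zero.

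For uniqueness, I would show that stability together with \cref{cond:scaling} pin the abcd exponents layer-by-layer. Given any candidate stable faithful parametrization $(\sigma_\ell,\eta_\ell,\ldots)$ satisfying \cref{cond:scaling}, the initialization half $\norm{\mW_\ell}_* = \Theta(\sqrt{\width_\ell/\width_{\ell-1}})$ combined with \cref{prop:randominit} (e.g.\ $\norm{\mW_\ell}_* \approx \sigma_\ell(\sqrt{\width_\ell}+\sqrt{\width_{\ell-1}})$ for Gaussian init) fixes $\sigma_\ell$ uniquely to the value in \cref{pzn:spectral_pzn}. The update half $\norm{\DW_\ell}_* = \Theta(\sqrt{\width_\ell/\width_{\ell-1}})$ is equivalent, via \cref{cor:DWstablerank}, to $\norm{\DW_\ell}_F = \Theta(\sqrt{\width_\ell/\width_{\ell-1}})$; the Frobenius side is an explicit function of the abcd exponents computable from the Master Theorem, and matching pins $\eta_\ell$ to the $\mu$P value. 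Stability supplies the upper-bound side of these constraints (exponents too large cause blow-up), while \cref{cond:scaling} supplies the lower-bound side (exponents too small make updates vanish in spectral norm); the unique joint solution is \cref{pzn:spectral_pzn}.

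The step I expect to be the main obstacle is making the measure-zero carve-out rigorous. Concretely, one must show that the limiting squared Hilbert--Schmidt norm of $\oplim{\tDW_\ell}$, namely $\EV_{\bx 0,\bx 1} Q(\ket\xx^{\bx 0};\ket\yy^{\bx 1};\mathring\cc)^2$, is (i) real-analytic in $\eta$ once the underlying Gaussians are fixed and (ii) not identically zero, with faithfulness supplying (ii). Because the ket recursions generated by an entrywise optimizer with pseudo-Lipschitz updates chain analytic operations on finitely many Gaussian variables together with the scalar learning rate, analyticity should propagate inductively in step count; the bookkeeping lies in ensuring this uniformly across all layers and all training steps, and in a union-bound argument over the finitely many (layer, time) pairs that make the overall exceptional set null.
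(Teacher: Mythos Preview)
Your forward argument is right for the hidden-layer matrices, and your use of \cref{prop:muPmatrix_param} together with \cref{prop:paramnorms} matches the paper there. The gap is that you never invoke the matrix/vector/scalar trichotomy that structures the paper's proof. \cref{prop:muPmatrix_param} is stated only for \emph{matrix} parameters (both dimensions scaling with width), and the paper applies it only there, concluding that for matrix parameters \cref{cond:scaling} holds \emph{regardless} of learning rate. The input and output layers are \emph{vector} parameters in the abcd framework and require a separate argument, which you do not supply; as written, your proof establishes \cref{cond:scaling} only for $\ell=2,\dots,L-1$.

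Relatedly, you have mislocated the measure-zero carve-out. The paper's concern is not that some $\DW_\ell$ vanishes in the limit, but that for a \emph{vector} parameter the weight $\mW_\ell$ itself can vanish after an update: for such a parameter the initialization and the update have Frobenius norms of the same order, so a specific learning rate can make them cancel exactly, killing $\norm{\mW_\ell}_*$. For matrix parameters this cannot happen, because the initialization has Frobenius norm $\Theta(\sqrt\width)$ while each update has Frobenius norm $\Theta(1)$. Your analyticity-in-$\eta$ machinery is more detailed than anything the paper offers (the paper simply asserts the bad set has measure zero), but it should be aimed at the $\mW_\ell$-vanishing event for vector parameters rather than at $\DW_\ell$. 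Your uniqueness argument, by contrast, is a reasonable expansion of the paper's one-sentence version (``any other stable and faithful parametrization just rescales initialization and update''); pinning $\sigma_\ell$ via \cref{prop:randominit} and $\eta_\ell$ via the Frobenius constraint from \cref{cor:DWstablerank} is the same idea made explicit.
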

\begin{proof}
    In $\mu$P,
    by \cref{prop:muPmatrix_param}, all matrix parameters satisfy \cref{{cond:scaling}} no matter what the learning rate is.
    However, for vector parameters, it's possible for some specific learning rate to cause the weights to vanish after an update, but at most a (Lebesgue) measure-zero set of learning rates will cause this to happen.
    Assuming this vanishing does not happen, $\mW$ has $\Theta(1)$ Frobenius norm at initialization and at all times during training.
    By \cref{prop:paramnorms}, $\mW$ also has $\Theta(1)$ spectral norm, so \cref{{cond:scaling}} is satisfied.
    A similar but easier argument applies to all scalar parameters.
    This shows $\mu$P satisfies \cref{{cond:scaling}}.

    Since any other stable and faithful parametrization essentially just rescales the initialization and the update, we see that no other parametrization can satisfy \cref{{cond:scaling}}.\end{proof}

For SGD, all abc-parametrizations are equivalent to a faithful abcd-parametrization because $Q$ is identity, so \cref{thm:main_abcd} recovers \cref{thm:recovermuP}.

\section{Empirical checks of \cref{assumption_1,assumption_2,assumption_3}}
\label{app:checking_assumptions}

In \cref{sec:warmup_and_extensions}, we first illustrated how \cref{cond:scaling} satisfies \cref{des:scaling} in a minimal model---a deep linear network trained for one step on one sample---and then iteratively extended our argument to multiple steps, nonlinearities, and multiple samples.
Each extension came with a mild assumption.
These assumptions are intended to be natural conditions one expects from generic network dynamics.

In this section, we restate each assumption, explain further why it is expected to hold, and then present a validating experiment.
All experiments use the same setup as that of \cref{fig:mup_vs_ntp_exp}.
Plotted quantities depending on the sample $\vx$ are averaged over all $\vx$ in the batch, with shaded envelopes showing one standard deviation of this mean over five experiment trials.

\theoremstyle{plain}
\newtheorem{assumptionrepeat}{Assumption}

\begin{assumptionrepeat}
Updates do not perfectly cancel initial quantities.
That is:
\begin{align}
    \norm{\mW_\ell + \DW_\ell}_*
    &=
    \Theta \left( \norm{\mW_\ell}_* + \norm{\DW_\ell}_* \right) \\
    \norm{\vh_\ell(\vx) + \Delta \vh_\ell(\vx)}_2
    &=
    \Theta(\norm{\vh_\ell(\vx)}_2 + \norm{\Delta \vh_\ell(\vx)}_2).
\end{align}
\end{assumptionrepeat}

Recall that the cancellation of two high-dimensional tensors (i.e. matrices or vectors) tends to be unlikely as dimension grows (unless there is a good reason to expect cancellation, which there is not here).
For the small learning rate used in our experiment, \cref{assumption_1} is in fact true simply because small updates are not big enough to cancel the initial quantities even if aligned.
In \cref{fig:assumption_1_check}, we verify that \cref{assumption_1} also holds when $\DW_\ell$ and $\Delta \vh_\ell(\vx)$ are respectively replaced by $\mW_\ell - \mW_\ell^0$ and $\vh_\ell(\vx) - \vh^0_\ell(\vx)$, the total updates across all of training.

\begin{figure}[H]
  \centering
  \includegraphics[width=15cm]{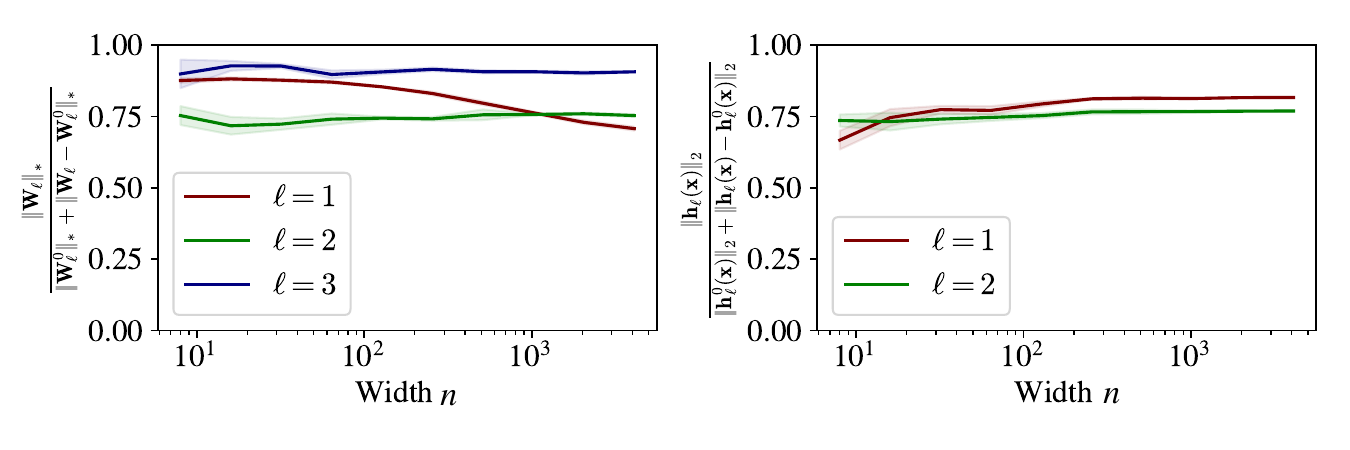}%
    \llap{\raisebox{4.8mm}{
    \includegraphics[width=2.3mm, clip=true, trim = 49mm 8mm 24mm 37mm]{img/n_example.pdf}%
    }\hspace*{25.4mm}}%
    \llap{\raisebox{4.8mm}{
    \includegraphics[width=2.3mm, clip=true, trim = 49mm 8mm 24mm 37mm]{img/n_example.pdf}%
    }\hspace*{98.5mm}}%
  \caption{
    \textbf{Verification of \cref{assumption_1}.}
    Subplots show the ratios between left- and right-hand sides of conditions of \cref{assumption_1} at various layers.
    Envelopes show variation over trials.
    The ratios are $\Theta(1)$ (showing no obvious decay with network width), verifying Assumption 1.
  }
  \label{fig:assumption_1_check}
\end{figure}

\begin{assumptionrepeat}
    $\norm{\vh'_\ell(\vx)}_2 = \Theta\left(\norm{\vh_\ell(\vx)}_2\right).$
\end{assumptionrepeat}

Recall that $\vh'_\ell(\vx) = \phi(\vh_\ell(\vx))$, and that $\vh_\ell(\vx)$ contains elements of size $\Theta(1)$.
Satisfaction of \cref{assumption_2} merely requires that $\phi$ maps a nonvanishing fraction of preactivations to nonzero quantities.
Its violation would require preactivations to concentrate in regions of $\mathbb{R}$ which $\phi$ maps to zero.
This might occur, for example, in the unlikely scenario in which $\phi = \textrm{ReLU}$ and almost all preactivations are negative.\footnote{It will also occur if the activation function is $\phi(z) = 0$, but we can neglect this edge case.}
For activations that do not map nonzero inputs to zero (for example, $\phi = \tanh$), \cref{assumption_2} may be dropped altogether.
\cref{fig:assumption_2_check} verifies \cref{assumption_2} in a deep ReLU MLP.
See \citet{poole:2016} for a theoretical framework sufficient to check \cref{assumption_2} at initialization.

\begin{figure}[H]
  \centering
  \includegraphics[width=8cm]{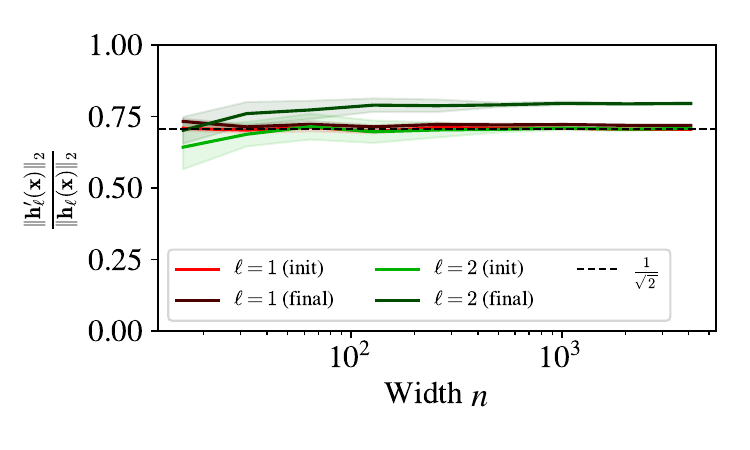}%
    \llap{\raisebox{4.6mm}{
    \includegraphics[width=2.2mm, clip=true, trim = 49mm 8mm 24mm 37mm]{img/n_example.pdf}%
    }\hspace*{27.7mm}}%
  \caption{
    \textbf{Verification of \cref{assumption_2}.}
    At all layers, the norm of the activation vector scales as the norm of the corresponding preactivation vector, both before and after training.
    (Note that the ratio in this case is close to $1/\sqrt{2}$, which is what one expects from an approximately-mean-zero random variable passed through a ReLU nonlinearity.)
  }
  \label{fig:assumption_2_check}
\end{figure}

\begin{assumptionrepeat}
    $\norm{\DW_\ell \vh_\ell(\vx_i)}_2 =
    \Theta(
    |\!|
    \frac{1}{B}
    \DW_\ell^{(i)} \vh_\ell(\vx_i) |\!|_2
    )$.
\end{assumptionrepeat}

Like \cref{assumption_1}, a violation of \cref{assumption_3} would require a perfect cancellation of high-dimensional matrices, which is unlikely.
We verify \cref{assumption_3} in \cref{fig:assumption_3_check}.
\cref{assumption_3} is also verified implicitly by the right-hand subplot of \cref{fig:batch_scaling}.

\begin{figure}[H]
  \centering
  \includegraphics[width=8cm]{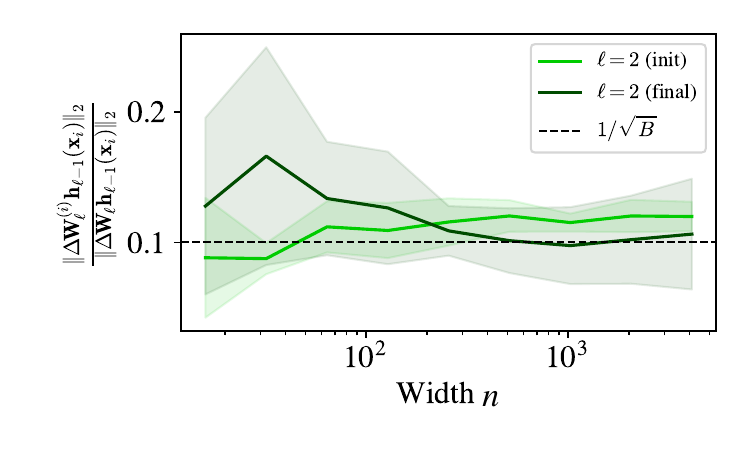}%
    \llap{\raisebox{4.55mm}{
    \includegraphics[width=2.2mm, clip=true, trim = 49mm 8mm 24mm 37mm]{img/n_example.pdf}%
    }\hspace*{26.5mm}}%
  \caption{
    \textbf{Verification of \cref{assumption_3}.}
    We compute $
    \|\DW_\ell^{(i)} \vh_\ell(\vx_i) |_2 / \norm{\DW_\ell \vh_\ell(\vx_i)}_2$ at layer $\ell = 2$ for a single step both at initialization and after training.
    As expected, this ratio remains $\Theta(1)$ as width grows --- note the lack of decay with width.
    We note as a curiosity that this quantity hovers around $1 / \sqrt{B}$ (dashed line), which one expects from the central limit theorem if, for example, $\DW_\ell$ resembles a sum of $B$ terms similar to $\DW_\ell^{(i)}$ but with random signs.
  }
  \label{fig:assumption_3_check}
\end{figure}

\section{Scalings for biases}
\label{app:biases}

Here we extend our spectral analysis to biases.
Let $\vb_\ell \in \R^{\width_\ell}$ be a bias vector which enters during forward propagation as $\vh_\ell(\vx) = \mW_\ell \vh'_{\ell-1}(\vx) + \vb_\ell$.
We may choose to view the bias vector as a weight matrix $\vb_\ell \in \R^{\width_\ell \times 1}$ connecting an auxiliary layer with width $1$ and output $1$ to the $\ell$th hidden layer, after which we may simply apply our scaling analysis for weight matrices.
The spectral scaling condition (\cref{cond:scaling}) prescribes that $\norm{\vb_\ell}_2 = \Theta(\sqrt{\width_\ell})$ and $\norm{\Delta \vb_\ell}_2 = \Theta(\sqrt{\width_\ell})$, and \cref{pzn:spectral_pzn} prescribes that the initialization scale and learning rate should be $\sigma^{b}_\ell = \Theta(1)$ and $\eta^{b}_\ell = \Theta(\width_\ell)$.
In practice, one may usually just take $\sigma^{b}_\ell = 0$.

\section{Nondimensionalization and natural norms}
\label{app:nondim}

While there are conventional notions of vector norms and matrix spectral norms, one can equip any norm on the input and output spaces of a matrix, thereby inducing a corresponding spectral norm on it. 
For example, the conventional spectral norm of a matrix is defined assuming the typical $\ell^2$ norm on the input and output spaces; if the input space is instead given the $\ell^1$ norm, then the spectral norm is something very different. 
As we explain below, the vectors encountered in deep neural networks naturally come with a specific notion of norm, which we refer to as the ``natural norm.'' When we equip vectors at various layers of the neural network with their natural norms, we induce what we call a ``natural spectral norm'' on the weight matrices. Our primary criterion \cref{cond:scaling} is then simplified to stipulating that these \emph{natural spectral norms} should be $\Theta(1)$ in the limit of large network width (\cref{cond:scaling_nondim}).

\subsection{Dense and sparse vectors}
Vectors in the context of deep learning typically fall into one of two categories: dense and sparse.
Dense vectors are characterized by having every entry contribute a constant amount to the squared Euclidean norm of the vector, while sparse vectors have only a constant number of entries that do so.
Here, the term ``constant'' means $\Theta(1)$ with respect to the size of the vector.%
\footnote{Actually, a vector of $d$ iid samples from a unbounded subgaussian or subexponential distribution will have its extreme elements scale like $\mathrm{polylog}(d)$, so the really correct thing to say here is $\tilde \Theta(1)$, but in the main text we convey the key intuition in the main text without being too pedantic.
This fine detail does not affect the conclusion of this section.}

For example: in the input layer, image data typically takes the form of dense vectors, while the one-hot encoding in language models takes the form of sparse vectors. The output vector of a network is typically a dense vector. All hidden vectors (pre- or post-activation) are dense.

\subsection{Defining natural norms}

\begin{definition}[Natural $\ell^2$-norm]
The \emph{natural $\ell^2$-norm} for a \emph{dense} vector $\vv \in \R^m$ is the RMS norm
\begin{equation}
    \nnorm{\vv}_{\tilde{2}} \defeq \frac{1}{\sqrt{m}} \norm{\vv}_2.
\end{equation}
The \emph{natural $\ell^2$-norm} for a \emph{sparse} vector $\vv$ is simply the usual $\ell^2$-norm:
\begin{equation}
    \nnorm{\vv}_{\tilde{2}} := \norm{\vv}_2.
\end{equation}
\end{definition}

While we say ``dense'' or ``sparse'' vector, these adjectives really apply to the space that contains such a vector.
For example, the set of one-hot encodings form a ``sparse'' input space, while the pre-activations at layer $\ell$ forms a ``dense'' hidden space.

\begin{definition}[Natural spectral norm]
Given a parameter matrix $\mA$, equip its input and output spaces with their natural norms. Then the \emph{natural spectral norm} $\nnorm{\mA}_{\tilde{*}}$ of $\mA$ is defined as the induced spectral norm.
\end{definition}

For example, if both the input and output spaces are dense, then the natural spectral norm of $\mA \in \R^{m \times n}$ is
\begin{equation}
    \nnorm{\mA}_{\tilde{*}} \defeq \frac{\sqrt{m}}{\sqrt{n}} \norm{\mA}_*.
\end{equation}

\subsection{Spectral scaling in natural norms}
By equipping vectors and matrices with their natural norms, we simplify the conditions \cref{des:scaling,cond:scaling} from the main text:

\begin{tcolorbox}[colback=white, colframe=black, boxrule=1pt, arc=0pt]
  \begin{desired}[Feature learning, natural norms] Let $\vh_\ell(\vx) \in\R^{\width_\ell}$ denote the features of input $\vx$ at layer $\ell$ of a deep neural network, and let $\Delta \vh_\ell(\vx)\in\R^{\width_\ell}$ denote their change after a gradient update. We desire that:
    \begin{align*}
        \nnorm{\vh_\ell}_{\tilde{2}} = \Theta (1)
        \quad \text{and} \quad
        \nnorm{\Delta \vh_\ell}_{\tilde{2}} = \Theta (1),
     \quad \text{ at layers } \ell=1,...,L\!-\!1.
    \end{align*}
\end{desired}
\end{tcolorbox}

Note the norm here is just RMS norm because we only talk about hidden vectors (which are always dense).

\begin{tcolorbox}[colback=white, colframe=black, boxrule=1pt, arc=0pt]
  \begin{condition}[Spectral scaling, natural norms]
  \label{cond:scaling_nondim}
    Consider applying a gradient update $\DW_\ell \in \R^{\width_\ell\times \width_{\ell-1}}$ to the $\ell$th weight matrix $\mW_\ell\in\R^{\width_\ell \times \width_{\ell-1}}$.
    The spectral norms of these matrices should satisfy:
    \begin{align*}
        \nnorm{\mW_\ell}_{\tilde{*}} = \Theta \left( 1 \right)
        \ \ \ \text{and} \ \ \
        \nnorm{\DW_\ell}_{\tilde{*}} = \Theta \left( 1 \right),
     \quad \text{ at layers } \ell=1,...,L.
    \end{align*}
\end{condition}
\end{tcolorbox}




In summary, using these rescaled norms (that we call ``natural norms''), our problem is nondimensionalized: feature vectors, feature vector updates, weight matrices, and weight matrix updates are $\Theta(1)$ in norm.
These natural norms provide a universal framework that covers specific cases, such as one-hot embeddings in language models, that are not handled directly by \cref{cond:scaling}.





\end{document}